\begin{document}

%

%

\twocolumn[

\aistatstitle{Two Phase  $Q-$learning for Bidding-based Vehicle Sharing}

\aistatsauthor{ Yinlam Chow \And Jia Yuan Yu \And Marco Pavone }

\aistatsaddress{ Stanford University \And Concordia University \And Stanford University } ]

\begin{abstract}
We consider one-way vehicle sharing systems where customers can rent a car at one station and drop it off at another. The problem we address is to optimize the distribution of cars, and quality of service, by pricing rentals appropriately. We propose a bidding approach that is inspired from
  auctions and takes into account the significant uncertainty inherent in the problem data (e.g., pick-up and
  drop-off locations, time of requests, and duration of trips). Specifically, in contrast to
  current vehicle sharing systems, the operator does not set
  prices. Instead, customers submit bids and the operator decides whether to
  rent or not. The operator can even accept negative bids to motivate
  drivers to rebalance available cars to unpopular destinations within a city. We model
  the operator's sequential decision-making problem as a \emph{constrained
    Markov decision problem} (CMDP) and propose and rigorously analyze a novel two phase $Q$-learning algorithm for its solution. Numerical experiments are presented and discussed.
\end{abstract}

\section{Introduction}
\label{sec:introduction}
One-way vehicle sharing systems represent an increasingly popular mobility paradigm 
aimed at effectively utilizing usage of idle vehicles, reducing
demands for parking spaces, and possibly cutting  down excessive carbon footprints due to personal
transportation. One-way vehicle sharing systems (also referred to as mobility-on-demand --MOD-- systems) consist of a network
of parking stations and a fleet of vehicles. A customer arriving at a given station  can pick up a vehicle (if available) and drop it off at any other
station within the city. Existing vehicle sharing systems include Zipcar
\cite{keegan2009zipcar}, Car2Go \cite{schmauss2009car2go} and
Autoshare \cite{reynolds2001autoshare} for one-way car sharing, and
Velib \cite{nair2013large} and City-bike \cite{didonato2002city} for
one-way bike sharing. Figure \ref{fig:example_car_sharing} shows a
typical Toyota i-Road one-way vehicle sharing system 
\cite{kendall2013toyota}.
 \begin{figure}[!h]
   \begin{center}
     \includegraphics[width=\linewidth]{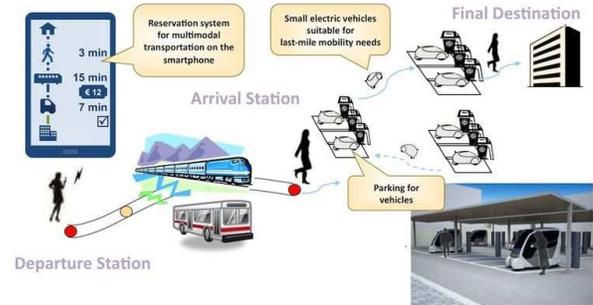}
   \end{center}
   \vspace{-0.35in}
   \caption{A Typical one-way vehicle sharing system that allows different
 pick-up and drop-off locations \cite{kendall2013toyota}.} \label{fig:example_car_sharing}
 \end{figure}


Despite the apparent advantages,  one-way vehicle sharing systems
present significant operational challenges. Due to the
asymmetry of travel patterns within a city, several  stations will eventually
experience imbalances of vehicle departures and customer
arrivals. Stations with low customer demands (e.g., in suburbs) will possess 
excessive unused vehicles and require a large number of parking spaces, while
stations with high demands (e.g., in the city center) will not be able to fulfill most
customers' requests during rush hours.


\emph{Literature Review}: In general, there are two main methods available in the literature to address demand-supply
imbalances in one-way vehicle sharing systems (however, if the vehicles can drive autonomously, additional rebalancing strategies are possible \cite{zhang2015control}).
A first class of methods is to hire crew drivers to periodically relocate vehicles among stations. From a theoretical standpoint, optimal rebalancing of drivers and vehicles has been analyzed in \cite{zhang2015queue}, under the framework of queueing networks.  In  \cite{barth1999simulation}, \cite{kek2006relocation},
\cite{shu2010bicycle}, the effectiveness of similar rebalancing strategies is numerically investigated  via discrete event
simulations. The work in \cite{nair2011fleet} considers a stochastic mixed-integer
programming   model where the objective is to minimize  
vehicle relocation cost subject to a probabilistic constraint on the service level. The  works in  \cite{Tal2013} and \cite{Mauro2014} consider a similar approach. While rebalancing can be quite effectively carried out by this way, 
these methods substantially increase sunk costs due to the large number of staff drivers that needs to be hired (a characterization of the number of drivers is provided in  \cite{zhang2015queue}), and may not scale well to large transportation networks.

Alternatively, demand-supply
imbalances can be addressed by imposing incentive pricing to vehicle
rentals. A typical incentive pricing mechanism is described  in
\cite{papanikolaou2011market} and portrayed in Figure
2. 
\begin{figure}[!h]
\begin{center}
\includegraphics[width=0.8\linewidth]{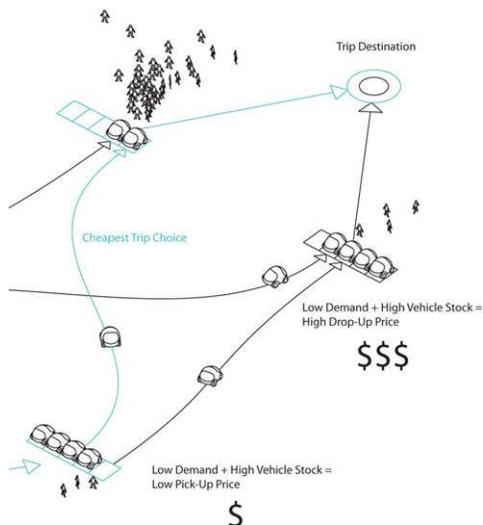}
\end{center}
\vspace{-0.35in}
\caption{Depiction of an incentive pricing mechanism where rental prices are adjusted  based on inventories and customers' demands \cite{papanikolaou2011market}.} 
\end{figure}
The strategy  is to adjust rental prices at each station  as a function of current
vehicle inventory and customers' requests. The work in \cite{uesugi2007optimization} proposes a method
to optimize vehicle assignments by trip splitting and trip
joining, and  \cite{mitchell2010reinventing} proposes
a dynamic pricing strategy that enables clients of a carpooling system to 
trade-off  convenience of a trip (i.e.,   duration) and cost.  Carpooling strategies, however, may not be a scalable
solution due to safety, convenience, and sociological reasons. Recently, the work in \cite{chow2015real} proposes a
bidding mechanism for vehicle rentals where at each station customers
place bids   and the
vehicle sharing company decides which bids to accept.  The
operator's sequential decision-making problem is posed as a  \emph{constrained Markov
  decision problem} (CMDP), which is solved exactly, or
approximately using an \emph{actor-critic} method.

A bidding strategy, such as the one proposed in \cite{chow2015real}, is attractive for several
reasons. First, accepted vehicle rental bids instantly reflect current
demands and supplies at different stations. Second, by providing
on-demand financial rewards for rebalancing vehicles, the rental
company can save overhead costs associated with  hiring crew drivers and renting extra
parking spaces. Third, such pricing mechanism promotes high  vehicle
utilizations by encouraging extra vehicle rentals to less popular
destinations and during non-rush hours.


\emph{Contributions}: The contribution of this paper is threefold. 
\begin{itemize}

\item Leveraging our recent findings  in \cite{chow2015real},  we propose a novel CMDP formulation for the problem of optimizing one-way vehicle sharing systems. The actions in the CMDP model represent vehicle rental decisions as a function of bids placed by customers. The objective  is to maximize total revenue from rental assignments, subject to a vehicle utilization constraint \footnote{The vehicle utilization constraint ensures that the assignment policy does  not excessively  favor short-term rentals. Accordingly, this constraint provides service guarantees for customers in need of long-term rentals. Further details about the practical relevance of this constraint can be found   in \cite{carsharing_report}.}. 
\item We derive a two-phase Bellman optimality condition for CMDPs. Such condition shows that CMDPs can be solved exactly by two-phase dynamic programming (DP).
\item We propose a novel sampling-based two-phase $Q-$learning algorithm for the solution of CMDPs, and show under mild assumptions convergence to an optimal solution.
\end{itemize}

This paper provides a first step toward designing market-based mechanisms for the operational optimization of  one-way vehicle sharing systems. We describe a wealth of open problems at the end of the paper. Furthermore, the results concerning CMDPs are of independent interest and applicable more broadly. Due to space limitations, in this paper we only include the statements of our  theoretical results. All proofs can be found in the supplementary material section. 


\section{Mathematical Model}\label{sec:veh_route_simple}

In this section we present a mathematical model for one-way vehicle sharing systems and then pose a CMDP decision-making problem for its optimization.

\subsection{General Model and Problem Data}\label{sec:input_1}

Assume the vehicle sharing company owns  $C$ vehicles, indexed  $1,\ldots,C$, that can be parked at $S$ stations, indexed $1,\ldots,S$. The company  only allows  passenger to rent for a maximum of $\overline{T}$ time slots; furthermore,  the maximum fare/reward for each rental period is $\overline{F}$. We consider a discrete time model $t=0,1,2,\ldots, T$, where $T$ is the time horizon. At time $t\geq 0$ and at each station $j\in\{1,\ldots,S\}$,  customers' destinations, rental durations, and proposed travel fares can be modeled via a multi-variate (three-dimensional) stationary probability distributions $\Phi_t^j$ with domain $\{1,\ldots,S\}\times (0,\overline{T}]\times[-\overline F,\overline F]$. 
The distribution $\Phi_t^j$ models the inherent stochastic, time-variant customer demand for car rentals.  A practical approach to estimate the demand distribution $\Phi^j_t$ at each station $j$ is to use data-driven methods, for example \cite{epanechnikov1969non}. 

At each time $t$ and station $j\in\{1,\ldots,S\}$, there are $M_{t}^j$ rental requests, where $M_{t}^j$ is modeled as a Poisson random variable with rate $\lambda_t^j$ (as usually done in the literature, see, for example, \cite{lim2007stochastic}). The associated  customers' destinations, rental durations, and proposed travel fares are i.i.d. samples drawn from the distribution $\Phi_t^j$. Such samples are collected in a random vector $\omega_t^j$, that is:
\[
\omega_t^j=((\mathbf{G}^{1,j}_t,\mathbf{T}^{1,j}_t,\mathbf{F}^{1,j}_t),\ldots,(\mathbf{G}^{M_{t}^j,j}_t,\mathbf{T}^{M_{t}^j,j}_t,\mathbf{F}^{M_{t}^j,j}_t)),
\]
where random variables $\mathbf{G}$, $\mathbf{T}$, and $\mathbf{F}$ denote, respectively, a customer's destination, rental  duration, and proposed fare (the meaning of the indices is clear from the definitions).

For $j\in\{1,\ldots,S\}$, denote by $\mathcal A^{j,j'}_{t}$ the number of customers that arrive at time $t$ at station $j$ and that  wish to travel to station $j'$. By definition,  $\mathcal A^{j,j'}_{t}=\sum_{m=1}^{M_t^j}\mathbf 1\{\mathbf{G}^{m,j}_t=j'\}$, where $1\{\cdot\}$ represents the indicator  function. 

This model captures both concepts of renting and rebalancing. Note that at station $j$, the random price offered by customer $m$, i.e., $\mathbf{F}^{m,j}_t$, can either be positive or negative. When this quantity is positive, it means that the customer is willing to pay $\mathbf{F}^{m,j}_t$ fare units for a vehicle, traveling to station $\mathbf{G}^{m,j}_t$  in $\mathbf{T}^{m,j}_t$ time units. If this quantity is negative, it means that the company is paying $\mathbf{F}^{m,j}_t$ fare units  to customer $m$ to travel to station $\mathbf{G}^{m,j}_t$ in $\mathbf{T}^{m,j}_t$ time units to fulfill rebalancing needs.


Note  that at each time instant and at each station, there can potentially be  more rental requests than vehicles available. The strategy is to   rank all incoming rental requests by a price-to-travel time function and assign vehicles according to such ranking. Specifically, for destination $j'\in\{1,\ldots,S\}$, we define the price-to-travel time function as
{\small
\[
f^{j'}_{\text{rank}}(\mathbf{G},\mathbf{T},\mathbf{F})\!=\!\left\{\!\!\begin{array}{cl}
\mathbf 1\{\mathbf{F}\geq 0\}\mathbf{F}/\mathbf{T}+\mathbf 1\{\mathbf{F}\leq 0\}\mathbf{F}\mathbf{T}&\text{if $\mathbf{G}= j'$}\\
-\infty&\text{otherwise}
\end{array}\right.
\]
}

By assigning vehicles according to the price-to-travel time function (in a descending order), one favors customers with high rental prices and short travel times (when renting), and drivers with low rewards and short travel times (when rebalancing). (If all vehicles move at similar speeds, one could equivalently consider distances instead of times.)

To explicitly model the instantaneous demand-supply imbalance throughout the transportation network, one can refine function $f^{j'}_{\text{rank}}(\mathbf{G},\mathbf{T},\mathbf{F})$ by including dependencies on the arrival station $j$ and time instant $t$. While this generalization is straightforward, we omit the details in the interest  of brevity.

\subsection{State Variables}\label{sec:state}
We now proceed to construct a CMDP model for a one-way vehicle sharing system. We consider the following state variables:
\vspace{-0.05in}
\begin{itemize}
\item For $t\geq 0$, $k_t =t \in\{0, \ldots,T - 1\}$ is  a counter state.
\item For $i\in\{1,\ldots,C\}$ and $t\geq 0$, $q^i_{t}\in \{1,\ldots,S\}$ is the destination station at time $t$ of the $i^{\text{th}}$ vehicle. Let $q_{t}:=(q^1_t,\ldots, q^C_t)$.
\vspace{-0.05in}
\item For $i\in\{1,\ldots,C\}$ and $t\geq 0$, $\tau^i_{t}\in\{0,1,2,\ldots,\overline{T}\}$ is the remaining travel time to reach the destination for the $i^{\text{th}}$ vehicle. Let $\tau_{t}:=(\tau^1_t,\ldots,\tau^C_t)$.
\end{itemize} 
Collectively, the state
space is defined  as $\mathbf X = \{0, \ldots, T -1\} \times \{1, \ldots, S\}^C \times
\{0, 1, 2, \ldots, \overline{T} \}^C$.
We let $x_0 = (0, q_0, \tau_0)$ denote the initial state, and $x_t = (k_t, q_t, \tau_t)$ the state at time $t$.
\subsection{Decision Variables}
At each time $t$, in order to maximize  expected revenue, the company makes decisions about renting vehicles to customers.    Specifically, at each time $t$, the decision variables are:
\begin{itemize}
\item For each station $j\in\{1,\ldots,S\}$, each vehicle $i\in\{1,\ldots,C\}$,  and $t\geq 0$, $u^{i,j}_t\in\{0,1\}$ is a binary decision variable that indicates whether vehicle $i$ is destined to station $j$ at time $t$. Let  $u_{t}:=(u^{1,1}_t,\ldots, u^{1,S}_t,\ldots,u^{C,1}_t\ldots,u^{C,S}_t)$.
\end{itemize}
For each station-destination pair $(j,j')\in\{1,\ldots,S\}^2$, we consider the following constraint to upper bound the number of vehicle dispatches at time $t\geq 0$:
\begin{equation}\label{eq:control_cons}
\begin{split}
&\sum_{i=1}^Cu^{i,j'}_{t}\mathbf{1}\{q^i_t=j,\,\tau^i_t=0\}\leq \mathcal{A}^{j,j'}_{t}\,\,\,\text{if $j\neq j'$},\\
&\sum_{i=1}^Cu^{i,j}_{t}\mathbf{1}\{q^i_t=j,\,\tau^i_t=0\}\leq \sum_{i=1}^C\mathbf{1}\{q^i_t=j,\,\tau^i_t=0\}.
\end{split}
\end{equation}
Intuitively, constraints \eqref{eq:control_cons} restrict the number of vehicle dispatches to be less than the number of customer requests. In the special case when $j=j'$, the upper bound is $\sum_{i=1}^C\mathbf{1}\{q^i_t=j,\,\tau^i_t=0\}$ instead of $\mathcal{A}^{j,j}_{t}$ because one needs to take into account the case when vehicle $i$ stays idle at station $j$.
Additionally, we consider  the following constraints to guarantee well-posedness  of the vehicles assignments:
\begin{equation}\label{eq:control_cons2}
\begin{split}
&u^{i,j}_{t}  =1,\,\,\forall i\in\{1,\ldots,C\},\,\,\text{if $\tau_t^i>0$ and $q_t^i=j$},\\
&\sum_{j=1}^Su^{i,j}_{t} =1,\,\,\forall i\in\{1,\ldots,C\}.
\end{split}
\end{equation}

Accordingly, the action space is $\mathbf U=\{0,1\}^{C\times S}$, and $u_{t}$ is the action taken at time $t$. Furthermore, define the set of admissible controls at state $x\in\mathbf X$ as $\mathbf{U}(x)\subseteq\mathbf{U}$, such that $\mathbf{U}(x)=\{u\in\mathbf{U} \text{ and it satisfies constraints \eqref{eq:control_cons} and \eqref{eq:control_cons2}}\}$.

\subsection{State Dynamics}\label{sec:input_2}
For  each time $t$ and station-destination pair $(j,j')\in\{1,\ldots,S\}^2$, 
 let 
\[
\mathcal S_t(j,j')=\{i\in\{1,\ldots,C\}:\tau^{i}_t=0,q_t^{i}=j,u^{i,j'}_{t}=1\},
\]
denote the set of vehicles at location $j$ which are allocated to destination $j'$. For $j\neq j'$, denote the rental durations and proposed fares of the $|\mathcal S_t(j,j')|$ highest ranked customers (according to the price-to-travel time function $f^{j'}_{\text{rank}}(\cdot)$) as $\{(\mathbf{T}_{t,j,j'}^{m},\mathbf{F}_{t,j,j'}^{m})\}_{m=1}^{|\mathcal S_t(j,j')|}$.

%
The case $j = j'$ is special,  as the quantity $|\mathcal S_t(j,j)|$ comprises vehicles that (1) will eventually return to station $j$, and (2) stay idle at station $j$. Notice that the definition of $ u_t^{i,j}$ does not allow one to distinguish between these two cases. However, in this case, it is not profitable  to rent vehicles to customers submitting negative bids. Hence one concludes  that  the number of vehicles that will eventually return to station $j$ should be equal to the number of customers that want to return to station $j$ and submit a positive bid. We denote such number as $V_t^{j,\text{in transit}}$. We similarly define the vector  $\{(\mathbf{T}_{t,j,j'}^{m},\mathbf{F}_{t,j,j'}^{m})\}_{m=1}^{|\mathcal S_t(j,j')|}$, with the understanding that the first  $V_t^{j,\text{in transit}}$ elements correspond to the customers that submit positive bids, while each remaining element is set to $(0,0)$.
%
%
%

The state dynamics are then given as follows:
\begin{itemize}
\item For each vehicle $i\in\{1,\ldots,C\}$ such that $\tau^i_t>0$,
\[
(k_{t+1}, q^i_{t+1},\tau^i_{t+1})=(k_t + 1, q_t^i,\max(\tau_t^i-1,0)).
\]
\item For all other vehicles, for each station-destination pair $(j,j')\in\{1,\ldots,S\}^2$, and each vehicle $i\in\mathcal S_t(j,j')$, uniformly randomly allocate $\{\mathbf{T}^{1}_{t,j,j'},\ldots,\mathbf{T}^{|\mathcal S_t(j,j')|}_{t,j,j'}\}$ to the available vehicles, i.e., 
\[
(k_{t+1}, q^i_{t+1},\tau^i_{t+1})=\left(k_t + 1,  j',\mathbf{T}_{t,j,j'}^{m}\right),
\] 
for all $m\in\{1,\ldots,|\mathcal S_t(j,j')|\}$.
\end{itemize}

As the problem is high-dimensional even for a moderate number of vehicles and stations, and since the state update equations depend in a rather involved fashion on the information vector $\omega_t^j$, the explicit derivation of the state transition probabilities (denote by $\mathbb P$) is impractical. This motivates the $Q$-learning approach proposed in this paper.
%
%

\subsection{Revenue and Constraint Cost Functions}\label{sec:revenue}
Define the   {\em immediate} reward function $\mathbf{R}: \mathbf{X}\times \mathbf{U}\rightarrow \mathbb{R}$ as
\[
\mathbf{R}(x,u)\!= 
\mathbb E\left[\sum_{j=1}^S\sum_{j'=1}^S\sum_{m=1}^{|\mathcal S_t(j,j')|}\mathbf{F}_{t,j,j'}^{m}\right].
\]
The total revenue is then given by
\[
\sum_{t=0}^{T}\mathbb E\left[\mathbf{R}(x_{t},u_{t})\right].
\] 

From a pure profit maximization standpoint, an operator should favor short term rental assignments, in order to minimize the opportunity cost of rejecting future customers that might potentially be more profitable. While this strategy optimizes the long-term revenue, it is not fair toward customers that require extended rental periods. To balance total profit with customers' satisfaction, we impose an additional  constraint that lower bounds vehicle utilization. Specifically,  the  constraint cost function  $\mathbf{D}:\mathbf{X}\times\mathbf U\rightarrow \mathbb{R}$ captures vehicle utilization at each time instant according to the definition 
\[
\mathbf{D}(x,u)=  \mathbf  d 
-\sum_{i=1}^C \frac{\tau^i}{TC},
\] 
where  $\mathbf d$ represents a threshold for the average utilization rate specified by the system operator.
The vehicle utilization constraint is then given by
\[
\sum_{t=0}^{T}\mathbb E\left [\mathbf{D}(x_t,u_t)\right] \leq 0.
\]

The objective   is then to maximize expected revenue while satisfying the vehicle utilization constraint. 

\subsection{CMDP Formulation}\label{sec:problem}
Equipped with the state space $\mathbf{X}$, control space $\mathbf{U}$, immediate reward $\mathbf{R}$, transition probability $\mathbb{P}$ (implicitly defined above), initial state $x_0$, and immediate constraint cost function $\mathbf D$, we pose the problem of controlling a vehicle sharing system as a CMDP. Specifically, note that the problem can be formulated on an infinite horizon by interpreting the set of states $\mathcal X=\{x_t \in \mathbf X:k_t=T\}$ (indeed containing a single state) as \emph{absorbing}, i.e., once $k_t = T$, the system enters a termination state, call it $\texttt{END}$, and stays there with \emph{zero} reward and constraint cost. From this perspective, $T$ represents a \emph{deterministic} stopping time, and the upper limits in the summations for the reward and cost functions can be replaced with $\infty$. We also define $\mathbf X'=\mathbf X\setminus \mathcal X$ as the set of \emph{transient states}.  Let $\Pi_M$ be the set of closed-loop, Markovian stationary policies $\mu: \mathbf{X}\rightarrow \mathbb P(\mathbf{U})$. A policy $\mu\in \Pi_M$ induces a stationary mass distribution over the realizations
of the stochastic process $(x_t,u_t)$.
It is well known that for CMDPs there is no loss of optimality in restricting the attention to policies in $\Pi_M$ (instead, e.g., of also considering history-dependent policies).

Accordingly, in this paper we wish to solve the CMDP problem: 
\begin{quote}{\bf Problem $\mathcal{OPT}$} -- Solve
\begin{align}
\text{maximize}_{\mu\in\Pi_M}  & &\quad & \mathbb E\left[\sum_{t=0}^{\infty}\mathbf{R}(x_{t},u_{t})\mid x_{0}\;{u}_{t}\sim\mu\right]\nonumber\\
\text{subject to}  & & \quad & \mathbb E\left[ \sum_{t=0}^{\infty} \mathbf{D}(x_{t},u_{t})\mid x_{0}\;{u}_{t}\sim\mu\right]\leq 0\nonumber.
\end{align}
\end{quote}
Since, in our problem,  the state and action spaces are exponentially large (in terms of the vehicle number $C$ and station number $S$) and  the explicit derivation  of the state transition probability is intractable, exact solution methods for CMDPs (e.g., \cite{altman1999constrained}) are not applicable. 

On the other hand, even if one computes an optimal randomized policy for Problem $\mathcal{OPT}$, executing such policy on a vehicle sharing platform with simple system architecture may lead to problems in vehicle mis-coordination \cite{paruchuri2004towards}. This motivates us to restrict the structure of admissible policies in problem $\mathcal{OPT}$ to Markovian, stationary and deterministic. In the next section we will introduce a two-phase Bellman optimality condition for problem $\mathcal{OPT}$, which serves as the theoretical underpinning for the design of an asymptotically optimal two-phase $Q-$learning algorithm. 

\section{Two-Phase Dynamic Programming Algorithm}
In this section, by leveraging the results in \cite{gabor1998multi}, we present a two-phase DP algorithm for problem $\mathcal{OPT}$. As we shall see, the first step is to compute a $Q$-function that allows one to \emph{refine} the set of feasible control actions (essentially, we retain only those actions that can guarantee the fulfillment of the constraint in problem $\mathcal{OPT}$). We then define a Bellman operator (restricted to the refined set of control actions) that allows the computation of an  optimal policy. The DP algorithm presented in this section provides the conceptual basis for the two-phase $Q-$learning algorithm presented in the next section. 

\subsection{Phase 1: Finding the Feasible Set}
In this section we  characterize the feasible set for problem $\mathcal{OPT}$ using the set of optimal policies from an auxiliary MDP problem, defined as follows:
\begin{quote}{\bf Problem $\mathcal{FS}$} -- Solve
\begin{equation}\label{eq:opt_new}
\min_{\mu \in \Pi_{MD}} \max\left\{0,\mathbb E\left[\sum_{t=0}^{\infty}\mathbf{D}(x_{t},u_t)\mid x_0,\mu\right]\right\},
\end{equation}
where $\Pi_{MD}$ is the set of closed-loop, Markovian, stationary, and deterministic policies.
\end{quote}
The following (trivial) result related minimizer for problem $\mathcal{FS}$ with feasible solutions for problem $\mathcal{OPT}$.

\begin{lemma}\label{tech_lem1}
Let $\mu:\mathbf X\rightarrow\mathbf U$ be a Markovian stationary deterministic policy that minimizes problem $\mathcal{FS}$, that is 
\[
\mu \in \arg\min_{\mu} \max\left\{0,\mathbb E\left[\sum_{t=0}^{\infty}\mathbf{D}(x_{t},u_t)\mid x_0,\mu\right]\right\}.
\]
Then the solution cost is equal to zero if and only if 
\[
\mathbb  E\left[\sum_{t=0}^{\infty}\mathbf{D}(x_{t},u_t)\mid x_0,\mu\right]\leq 0. 
\]

\end{lemma}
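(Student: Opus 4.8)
The plan is to unwind the definition of the objective of problem $\mathcal{FS}$ and exploit the elementary properties of the map $y\mapsto\max\{0,y\}$. I would introduce the shorthand $J(\mu):=\mathbb E[\sum_{t=0}^{\infty}\mathbf D(x_t,u_t)\mid x_0,\mu]$ for the (unconstrained) expected cumulative constraint cost under $\mu$, so that the objective of $\mathcal{FS}$ is $g(\mu):=\max\{0,J(\mu)\}$, and the ``solution cost'' referenced in the statement is $g(\mu)$ evaluated at a minimizer, which coincides with $\min_{\mu\in\Pi_{MD}}g(\mu)$.

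First I would record the trivial but relevant observations that $g(\mu)\ge 0$ for every $\mu$ (hence $\min_{\mu}g(\mu)\ge 0$), and that this minimum is attained, since the effective horizon is the deterministic stopping time $T$ and the state and action spaces are finite, so $\Pi_{MD}$ is finite. Consequently, for the minimizer $\mu$ in the statement, the solution cost is precisely $g(\mu)$.

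For the ``if'' direction, assume $J(\mu)\le 0$; then $\max\{0,J(\mu)\}=0$ directly from the definition of the maximum, so $g(\mu)=0$, which is exactly the claim that the solution cost equals zero. For the ``only if'' direction, assume the solution cost is zero, i.e.\ $g(\mu)=\max\{0,J(\mu)\}=0$; since $\max\{0,J(\mu)\}\ge J(\mu)$, we conclude $J(\mu)\le 0$, the desired inequality (and, incidentally, a certificate that $\mu$ is feasible for problem $\mathcal{OPT}$).

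I do not expect any genuine obstacle: the statement is purely definitional once one notes that $\max\{0,\cdot\}$ vanishes exactly on the nonpositive reals. The only point that warrants a line of care is making explicit that the ``solution cost'' is the value of $g$ at a minimizer and that this equals $\min_{\mu}g(\mu)$; if one wished to be fully rigorous about attainment of that minimum, one would invoke finiteness of $\Pi_{MD}$, which is already guaranteed by the finite state and action spaces together with the deterministic stopping time $T$.
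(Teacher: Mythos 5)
Your proposal is correct and follows essentially the same argument as the paper: both directions reduce to the observation that $\max\{0,\cdot\}$ is nonnegative and vanishes exactly on the nonpositive reals, applied to the expected cumulative constraint cost $\mathbb E\left[\sum_{t=0}^{\infty}\mathbf D(x_t,u_t)\mid x_0,\mu\right]$. The extra remark about attainment of the minimum over $\Pi_{MD}$ is harmless but not needed, since the lemma already posits a minimizer.
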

Equipped with the above result, one can immediately deduce the following:
\begin{itemize}
\item If the solution cost to problem $\mathcal{FS}$ is strictly larger than zero, i.e.,
\[
\min_{\mu\in\Pi_{MD}} \max\left\{0,\mathbb E\left[\sum_{t=0}^{\infty}\mathbf{D}(x_{t},u_t)\mid x_0,\mu\right]\right\}>0,
\]
then problem $\mathcal{OPT}$ is infeasible.
\item Otherwise, the feasible set of Markovian stationary deterministic policies is given by
\[
\Pi_{\text{FS}}=\arg\min_{\mu\in\Pi_{MD}} \max\left\{0,\mathbb E\left[\sum_{t=0}^{\infty}\mathbf{D}(x_{t},u_t)\mid x_0,\mu\right]\right\}.
\]
\end{itemize}
In order to characterize the feasible set $\Pi_{\text{FS}}$, we derive a Bellman optimality condition for problem $\mathcal{FS}$ and demonstrate how $\Pi_{\text{FS}}$ can be computed via DP. 

Before getting into the main result, we define the Bellman operator for problem $\mathcal{FS}$ as follows:
\[
\begin{small}
\mathbf T[V](x)\!:=\!\! \min_{u\in\mathbf U(x)}\!\! \max\!\left\{\!\mathcal B(x),\mathbf{D}(x,u)+\!\!\!\!\sum_{x^\prime\in \mathbf X'}\!\!\mathbb{P}(x^\prime|x,u)V(x')\!\right\},
\end{small}
\]
where $\mathcal B(x)$ is the indicator function
\[
\mathcal B(x)=\left\{\begin{array}{cl}
0&\text{if $ x\in\mathcal X$}\\
-\infty&\text{otherwise}
\end{array}\right..
\]
With such definition of Bellman operator $\mathbf T$, we will later see that the fixed point solution of $\mathbf T[V](x)=V(x)$, $\forall x\in{\mathbf{X}'}$ is equal to the solution of problem $\mathcal{FS}$, given by \eqref{eq:opt_new}. 

For any bounded initial value function estimate $V_0:{\mathbf{X}}\rightarrow\mathbb R$ with $V_0(x)=0$ for $x\in\mathcal X$,  we define
the  value function sequence 
\begin{equation}\label{eq:VI_FEA}
V_{k+1}(x)=\mathbf T[V_k](x),\,\,\forall x\in\mathbf X',\,\,k\in\{0,1,\ldots\}.
\end{equation}
The following theorem shows that this sequence of value function estimates converges to the solution of problem $\mathcal{FS}$, which is also the unique fixed point of $\mathbf T[V](x)=V(x)$, $\forall x\in\mathbf X'$. 
\begin{theorem}[Bellman Optimality for $\mathcal{FS}$]\label{thm:opt}
For any initial value function estimate $V_0:{\mathbf{X}}\rightarrow\mathbb R$ where $V_0(x)=0$ at $x\in\mathcal X$, there exists a limit function $V^*$ such that 
\begin{equation}\label{eq:equiv}
\begin{split}
V^*(x_0)=&\lim_{N\rightarrow\infty} \mathbf T^N[V_{0}](x_0)\\
=&\min_{\mu\in\Pi_M} \!\max\left\{\!0,\mathbb E\left[\sum_{t=0}^{\infty}\mathbf{D}(x_{t},u_t)\!\mid\! x_0,\mu\right]\!\right\}.
\end{split}
\end{equation}
Furthermore, $V^*$ is a unique solution to the fixed point equation: $\mathbf T[V](x)=V(x)$, $\forall x\in{\mathbf{X}'}$.
\end{theorem}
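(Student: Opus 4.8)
The plan is to prove \eqref{eq:equiv} by exploiting the structural fact that, although Problem $\mathcal{FS}$ is written on an infinite horizon, the counter component of the state forces a \emph{deterministic} finite stopping time: since $k_{t+1}=k_t+1$ almost surely and counter value $T$ is absorbing, every admissible trajectory reaches $\mathcal X$ after exactly $T-k_0$ steps, regardless of the policy. First I would record the two elementary properties of $\mathbf T$ that drive everything else: \emph{monotonicity} (if $V\le V'$ pointwise then $\mathbf T[V]\le\mathbf T[V']$, which is immediate because $V\mapsto\mathbf D(x,u)+\sum_{x'}\mathbb P(x'|x,u)V(x')$ is order preserving and so are $\min_u$ and $\max\{\mathcal B(x),\cdot\}$), and a \emph{backward-causality} property: from a state with counter $k$ every admissible transition lands on counter $k+1$, and $\mathbf T$ is vacuous on $\mathcal X$. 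Ordering $\mathbf X'$ by the counter and arguing backward on $k=T-1,\dots,0$, the iteration \eqref{eq:VI_FEA} stabilizes on the counter-$k$ slice after exactly $T-k$ sweeps; hence $V^*(x):=\lim_{N\to\infty}\mathbf T^N[V_0](x)$ exists, is finite on $\mathbf X'$, equals $\mathbf T^T[V_0]$, satisfies $\mathbf T[V^*]=V^*$ on $\mathbf X'$, and does not depend on $V_0$ restricted to $\mathbf X'$.

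Next I would identify $V^*$ with the optimal value of $\mathcal{FS}$. For $\mu\in\Pi_M$ introduce the affine, sub-stochastic operator $\mathbf T^\mu[V](x)=\mathbb E_{u\sim\mu(x)}\big[\max\{\mathcal B(x),\mathbf D(x,u)+\sum_{x'\in\mathbf X'}\mathbb P(x'|x,u)V(x')\}\big]$, so that $\mathbf T=\min_{\mu\in\Pi_M}\mathbf T^\mu$ pointwise. The backward-causality argument again gives $\mathbf T^\mu$ a unique bounded fixed point $V^\mu$ on $\mathbf X'$ (reached in $T$ sweeps), and unrolling those sweeps identifies $V^\mu(x)$ with the expected accumulated constraint cost $\mathbb E[\sum_t\mathbf D(x_t,u_t)\mid x_0=x,\mu]$, the $\mathcal B$ term merely enforcing the absorbing-state boundary condition. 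A standard optimality-equation sandwich then gives $V^*=\min_{\mu\in\Pi_M}V^\mu$: on one hand $\mathbf T\le\mathbf T^\mu$ together with monotonicity yields $\mathbf T^N[V_0]\le(\mathbf T^\mu)^N[V_0]$ for all $N$, so $V^*\le V^\mu$; on the other, the deterministic policy $\bar\mu$ attaining the $\min_u$ in $\mathbf T[V^*]$ satisfies $\mathbf T^{\bar\mu}[V^*]=\mathbf T[V^*]=V^*$, hence $V^{\bar\mu}=V^*$ by uniqueness, so $\min_{\Pi_M}$ coincides with $\min_{\Pi_{MD}}$ as in \eqref{eq:opt_new}. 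Carrying the outer $\max\{0,\cdot\}$ through this computation — using that it is nondecreasing and hence exchangeable with the minimization over policies — then yields the second equality in \eqref{eq:equiv}. Uniqueness of the fixed point on $\mathbf X'$ is immediate afterwards: any bounded $W$ with $W|_{\mathcal X}=0$ and $\mathbf T[W]=W$ on $\mathbf X'$ is a fixed point of \eqref{eq:VI_FEA}, so $W=\mathbf T^N[W]$ for every $N$ and therefore $W=\lim_N\mathbf T^N[W]=V^*$.

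The step I expect to be the main obstacle is the identification of the DP limit with the value of $\mathcal{FS}$, specifically reconciling the \emph{global} truncation $\max\{0,\cdot\}$ in the objective of \eqref{eq:opt_new} with the \emph{stage-wise} operator $\mathbf T$: this quantity does not decompose along trajectories in the naive way, so the argument has to route it through the absorbing-state device, where $\max\{\mathcal B(x),\cdot\}$ is inert on transient states but pins down the value collected upon absorption into $\mathcal X$. Verifying that iterating $\mathbf T^\mu$ from $V_0$ reproduces exactly the desired cost-to-go — i.e. getting this absorbing-state bookkeeping right — is the crux; by contrast, convergence and uniqueness are routine once the deterministic finite stopping time is in hand, with monotonicity doing the remaining work.
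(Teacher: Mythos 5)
Your overall architecture is sound and is essentially a repackaging of the paper's argument: the paper proves by induction an explicit formula for $\mathbf T^N[V_0]$ (the optimal $N$-stage cost with terminal value $V_0$, with the projection $\Pi_{\mathcal B(x)}$ carried at the root), then uses the fact that every trajectory is absorbed into $\mathcal X$ after $T$ steps to pass to the limit, and proves uniqueness exactly as you do. Your backward-on-the-counter stabilization in $T$ sweeps, the policy-evaluation operators $\mathbf T^\mu$ with the optimality sandwich, and the extraction of a deterministic minimizer (so that the minimum over $\Pi_M$ coincides with that over $\Pi_{MD}$) are a legitimate, arguably cleaner, variant of the same finite-horizon reasoning; none of that is where the difficulty lies.

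The genuine gap is precisely the step you flagged as the crux, and your proposed resolution does not close it. Your construction correctly identifies $V^\mu(x)$ with the \emph{untruncated} cost $E_\mu:=\mathbb E\left[\sum_t \mathbf D(x_t,u_t)\mid x,\mu\right]$: since $\mathcal B\equiv-\infty$ on $\mathbf X'$ and $\mathbf T$ sums only over transient successors, the truncation never enters the iteration, so your argument delivers $V^*(x_0)=\min_\mu E_\mu$. The exchange identity $\min_\mu\max\{0,E_\mu\}=\max\{0,\min_\mu E_\mu\}$ is true, but it therefore shows that the value of problem $\mathcal{FS}$ equals $\max\{0,V^*(x_0)\}$, not $V^*(x_0)$; these differ whenever $\min_\mu E_\mu<0$, which can occur because $\mathbf D=\mathbf d-\sum_i\tau^i/(TC)$ is not sign-definite (a one-stage example with a single action and $\mathbf D=-1$ gives $V^*(x_0)=-1$ while the $\mathcal{FS}$ value is $0$). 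So as written you have proved the second equality in \eqref{eq:equiv} only under the additional hypothesis $V^*(x_0)\ge0$; to close the statement you must either add that hypothesis, or apply the truncation once at the root after convergence (which is all that Phase 1 actually needs, since feasibility is equivalent to $V^*(x_0)\le0$), or change the boundary handling so that the operator genuinely implements the outer $\max\{0,\cdot\}$ — note that truncating stage-wise does not work either, since $\max\{0,\cdot\}$ of a sum is not the sum of stage-wise truncations. For comparison, the paper's own proof is thin at the identical point: its induction formula carries $\Pi_{\mathcal B(x_0)}$, which is vacuous at the transient initial state, and the $\max\{0,\cdot\}$ is inserted in the limiting step without further justification. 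You have located the real obstacle correctly, but neither your exchange argument nor a direct appeal to the paper's own reasoning resolves it as stated.
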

By running the value iteration algorithm in \eqref{eq:VI_FEA}, one obtains the optimal value function for problem $\mathcal{FS}$. If $V^*(x_0) = 0$, then, by Lemma \ref{tech_lem1}, every  feasible policy for problem $\mathcal{OPT}$, denoted by $\mu_{\text{FS}}$, can be obtained as 
{\small
\begin{equation*}
\mu_{\text{FS}}(x)\!\in\!\arg\!\!\!\min_{u\in\mathbf U(x)}\!\!\!\max\left\{\!\mathcal B(x),\mathbf{D}(x,u)\!+\!\!\!\sum_{x^\prime\in \mathbf X'}\!\!\mathbb{P}(x^\prime|x,u)V^*(x')\!\right\}.
\end{equation*}
}

However, since the number of feasible policies is exponential in the  size of state and action spaces, their exhaustive enumeration is intractable, and the above result is useful only form a conceptual standpoint. To address this problem, we consider a refined notion of feasible control actions in terms of  optimal state-action value functions ($Q-$functions), which provides the basis for the two-phase DP algorithm. Specifically, a $Q-$function is defined as:
{\small
\[
Q^*(x,u):= \max\!\left\{\!\mathcal B(x),\mathbf{D}(x,u)+\!\!\!\!\sum_{x^\prime\in \mathbf X'}\mathbb{P}(x^\prime|x,u)V^*(x')\right\},
\]
where $V^*(x)= \min_{u\in\mathbf U(x)}Q^*(x,u)$ for $x\in\mathbf X'$.
By defining the state-action Bellman operator 
\[
\mathbf F[Q](x,u)\!=\! \max\left\{\!\mathcal B(x), \!\mathbf{D}(x,u)\!+\!\!\!\sum_{x^\prime\in \mathbf X'}\!\!\mathbb{P}(x^\prime|x,u)\!\! \min_{u'\in\mathbf U(x')}Q(x',u')\!\right\},
\]
equivalently $Q^*$ is a unique fixed point solution to $\mathbf F[Q](x,u)=Q(x,u)$ for any $u\in\mathbf U(x)$, $x\in\mathbf X'$. Note the interpretation of $Q^*(x,u)$ from these equations: it is the constraint cost of starting at state $x\in\mathbf X'$, using control action $u$ in the first stage, and using an optimal policy of problem $\mathcal{FS}$ thereafter. 

We are now in a position to define a \emph{refined} set of feasible control actions, denoted by $\mathbf U_{\text{FS}}(Q^*,x)$, whereby we retain only those actions that can guarantee the fulfillment of the constraint in problem $\mathcal{OPT}$. Specifically, for any state $x\in\mathbf X'$, we define:
{\small
\[
\begin{split}
\mathbf U_{\text{FS}}&(Q^*,x):= \{u\in\mathbf U(x):\exists  \mu_{\text{FS}}\in\Pi_{\text{FS}} \text{ such that } u=\mu_{\text{FS}}(x)\}\\
=&\bigg\{\!u\in\mathbf U(x)\!:\!u\in \!\!\arg \, \min_{u'\in\mathbf U(x)}\!\!Q^*(x,u') \text{ and } Q^*(x,u)\!=\! 0\!\bigg\}.
\end{split}
\]
}


\subsection{Phase 2: Constrained Optimization}
From the definition of the set of feasible policies $\Pi_{\text{FS}}$, one can reformulate  problem $\mathcal{OPT}$ as:
\begin{equation}\label{eq:cons_obj}
\max_{\mu\in\Pi_{\text{FS}}}\mathbb E\left[\sum_{t=0}^T{\mathbf{R}}(x_{t},u_{t})\mid x_0,\mu\right].
\end{equation}
This problem can be solved via value iteration by defining  the Bellman operator (with respect to the \emph{refined} set of control actions $\mathbf U_{\text{FS}}(Q^*,x)$):
{\small
\[
\mathbf T_R[W](x):=\max_{u\in\mathbf U_{\text{FS}}(Q^*,x)}\Bigg\{{\mathbf{R}}(x,u)+\sum_{x^\prime\in \mathbf X'}\mathbb{P}(x^\prime|x,u)W\left(x^\prime\right)\!\!\Bigg\}.
\]
}

The following theorem shows there exists a unique fixed point solution to $\mathbf T_R[W](x)=W(x)$ and such solution corresponds to the value function for the problem in \eqref{eq:cons_obj} (and, hence, problem $\mathcal {OPT}$).

\begin{theorem}[Bellman Optimality for $\mathcal{OPT}$]\label{thm:opt_2}
For any initial value function estimate $W_0:{\mathbf{X}}\rightarrow\mathbb R$ such that $W_0(x)=0$ for any $x\in\mathcal X$, there exists a limit function $W^*$ such that 
\[
\begin{split}
 W^*(x_0)=&\lim_{N\rightarrow\infty} (\mathbf T_R)^N[W_{0}](x_0)\\
 =&\max_{\mu\in\Pi_{\text{FS}}}\mathbb E\left[\sum_{t=0}^{\infty}{\mathbf{R}}(x_{t},u_{t})\mid x_0,\mu\right].
 \end{split}
\] 
Furthermore, $W^*$ is a unique solution to the fixed point equation: $\mathbf T_R[W](x)=W(x)$, for any $ x\in{\mathbf{X}'}$. 
\end{theorem}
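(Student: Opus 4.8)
The argument runs parallel to that of Theorem~\ref{thm:opt} and hinges on one structural fact: the counter component $k_t$ increases deterministically by one at every step, so $\mathbf X$ is \emph{layered} as $\mathbf X=\bigcup_{s=0}^{T}\mathbf X_s$ with $\mathbf X_s:=\{x\in\mathbf X:k=s\}$, $\mathbf X_T=\mathcal X=\{\texttt{END}\}$, and every transition sends $\mathbf X_s$ into $\mathbf X_{s+1}$. Hence $\mathbf T_R[W](x)$ for $x\in\mathbf X_s$ depends on $W$ only through its restriction to $\mathbf X_{s+1}$, and every policy absorbs into $\texttt{END}$ in exactly $T-k_0$ steps. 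Since $\mathbf R$ is bounded on the finite set $\mathbf X\times\mathbf U$, the infinite-horizon reward $\mathbb E[\sum_{t=0}^{\infty}\mathbf R(x_t,u_t)\mid x_0,\mu]$ is finite and equals $\mathbb E[\sum_{t=0}^{T}\mathbf R(x_t,u_t)\mid x_0,\mu]$, so \eqref{eq:cons_obj} is a genuine finite-horizon MDP. Finally, $\mathbf U_{\text{FS}}(Q^*,x)\neq\emptyset$ for every $x$: feasibility of $\mathcal{OPT}$ makes $\Pi_{\text{FS}}$ non-empty, and by definition every $\mu_{\text{FS}}\in\Pi_{\text{FS}}$ realizes some element of $\mathbf U_{\text{FS}}(Q^*,x)$ at each $x$.

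\noindent\textbf{Step 1 (existence, uniqueness, convergence).} I would define $W^*$ directly by backward recursion over the layers: $W^*(x):=0$ on $\mathbf X_T$ and $W^*(x):=\mathbf T_R[W^*](x)$ on $\mathbf X_s$, $s=T-1,\dots,0$. This is well defined because $\mathbf T_R$ looks only one layer ahead, and it is by construction a fixed point of $\mathbf T_R$ on $\mathbf X'$. Uniqueness among value functions that vanish on $\mathcal X$ follows from a downward induction on $s$: two such fixed points agree on $\mathbf X_T$, and agreement on $\mathbf X_{s+1}\cup\dots\cup\mathbf X_T$ forces agreement on $\mathbf X_s$. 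The same one-layer-ahead property gives finite-step convergence of value iteration: for any initializer $W_0$ vanishing on $\mathcal X$, a short induction shows $(\mathbf T_R)^k[W_0]$ agrees with $W^*$ on $\mathbf X_s$ for every $s\ge T-k$, so $(\mathbf T_R)^N[W_0](x_0)=W^*(x_0)$ for all $N\ge T$. No contraction/discounting argument is required.

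\noindent\textbf{Step 2 (identification with \eqref{eq:cons_obj}) and main obstacle.} By the Phase-1 characterization $\mathbf U_{\text{FS}}(Q^*,x)=\{u\in\mathbf U(x):\exists\,\mu_{\text{FS}}\in\Pi_{\text{FS}},\,u=\mu_{\text{FS}}(x)\}$ (equivalently, $u\in\arg\min_{u'}Q^*(x,u')$ with $Q^*(x,u)=0$), the deterministic stationary policies selecting an action of $\mathbf U_{\text{FS}}(Q^*,x)$ at every $x\in\mathbf X'$ are precisely the policies of $\Pi_{\text{FS}}$; thus maximizing through $\mathbf T_R$ is the same as optimizing \eqref{eq:cons_obj} over $\Pi_{\text{FS}}$. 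Then the standard finite-horizon verification closes the proof: for any $\mu\in\Pi_{\text{FS}}$, backward induction on $s$ gives $\mathbb E[\sum_{t=s}^{T}\mathbf R(x_t,u_t)\mid x_s=x,\mu]\le W^*(x)$ for $x\in\mathbf X_s$; and a policy $\mu^*$ that picks, at each $x\in\mathbf X'$, a maximizer $u^*(x)\in\arg\max_{u\in\mathbf U_{\text{FS}}(Q^*,x)}\{\mathbf R(x,u)+\sum_{x'\in\mathbf X'}\mathbb P(x'|x,u)W^*(x')\}$ lies in $\Pi_{\text{FS}}$ and attains equality. Setting $s=0$, $x=x_0$ yields $W^*(x_0)=\max_{\mu\in\Pi_{\text{FS}}}\mathbb E[\sum_{t=0}^{T}\mathbf R(x_t,u_t)\mid x_0,\mu]$, and $\mu^*$ is optimal for $\mathcal{OPT}$. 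The one genuinely delicate point throughout is the bookkeeping around $\mathbf U_{\text{FS}}(Q^*,\cdot)$ — its non-emptiness wherever $\mathbf T_R$ is applied, and the exact match between the $\mathbf U_{\text{FS}}$-greedy policies and $\Pi_{\text{FS}}$ — which is precisely where Lemma~\ref{tech_lem1}, Theorem~\ref{thm:opt}, and the two equivalent descriptions of the refined action set are invoked; everything else follows from the finite, acyclic, deterministic-horizon structure.
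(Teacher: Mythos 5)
Your argument is correct, but it follows a genuinely different route from the paper's. The paper proves this theorem exactly as it proves Theorem~\ref{thm:opt}: it establishes an $N$-step rollout identity $(\mathbf T_R)^N[W_0](x)=\max_{\mu}\mathbb E\bigl[\sum_{t=0}^{N-1}\mathbf R(x_t,u_t)+W_0(x_N)\mid x,\mu\bigr]$ by induction on $N$, passes to the limit by bounding the residual term $\|W_0\|_\infty\,\mathbb P[x_N\in\mathbf X']$ using absorption after $T$ steps, and gets uniqueness of the fixed point from the weighted-sup-norm contraction of $\mathbf T_R$ (Lemma~\ref{prop:basic_prop_TR}, with weights $\xi(x)=T-t$ and modulus $\beta=(T-1)/T$). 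You instead exploit the layered, acyclic structure of the counter directly: backward recursion over layers defines $W^*$, value iteration is \emph{exactly} equal to $W^*$ at $x_0$ after $T$ applications of $\mathbf T_R$ (not merely in the limit), uniqueness follows by downward induction on layers, and the identification with \eqref{eq:cons_obj} is closed by a standard finite-horizon verification argument. Your route is more elementary (no contraction machinery) and yields a stronger finite-step convergence statement; the paper's route buys reusable structure, since the $\xi$-weighted contraction is exactly what is needed again in the $Q$-learning convergence proofs (Theorems~\ref{thm:syn_Q} and~\ref{thm:asy_Q}). Two bookkeeping points, which you correctly flag as the delicate part: the claim that policies selecting from $\mathbf U_{\text{FS}}(Q^*,\cdot)$ at every state are exactly $\Pi_{\text{FS}}$ does not follow from the ``$u=\mu_{\text{FS}}(x)$'' description alone; it needs the $Q^*$-based description plus a short induction showing that any such policy has constraint cost at most $Q^*(x_0,\mu(x_0))=0$ (only one direction, $\mu\in\Pi_{\text{FS}}\Rightarrow\mu(x)\in\mathbf U_{\text{FS}}(Q^*,x)$, is needed for the upper-bound half of the verification). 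Also note that the two descriptions of $\mathbf U_{\text{FS}}$ can disagree at states with $V^*(x)>0$, but that ambiguity is inherited from the paper's own definition rather than introduced by your proof.
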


Therefore, for any bounded initial value function estimate $W_0:{\mathbf{X}}\rightarrow\mathbb R$ such that $W_0(x)=0$ at $x\in\mathcal X$, the value function estimate sequence 
\begin{equation}\label{eq:VI_OPT}
W_{k+1}(x)=\mathbf T_R[W_k](x),\,\,\forall x\in\mathbf X',\,\,k\in\{0,1,\ldots,\},
\end{equation}
converges to the value function for problem $\mathcal{OPT}$. 

Finally, we define the $Q-$function for problem \eqref{eq:cons_obj} (denoted as $H$ to distinguish it from the $Q-$function for problem $\mathcal FS$):
\[
H^*(x,u) := \mathbf{R}(x,u)+\!\sum_{x^\prime\in \mathbf X'}\mathbb{P}(x^\prime|x,u) \, W^*(x'),
\]
where $W^*(x)$ is the value function for problem \eqref{eq:cons_obj}. By defining the state-action Bellman operator
{\small
\[
\mathbf F_R[H](x,u)\!=\! \mathbf{R}(x,u)+\!\sum_{x^\prime\in \mathbf X'}\mathbb{P}(x^\prime|x,u)\!\!\!\min_{u'\in\mathbf U_{\text{FS}}(Q^*,x')}\!\!\!H\left(x^\prime,u^\prime\right),
\]
}
one can show, similarly as before, that $H^*$ is a unique fixed point solution of  $\mathbf F_R[H](x,u)=H(x,u)$ for $u\in\mathbf U_{\text{FS}}(Q^*,x)$, $x\in\mathbf X'$. 

The above two-phase Bellman optimality condition immediately leads to a two-phase DP algorithm for the solution of problem $\mathcal{OPT}$ . However, such DP algorithm presents two main implementation challenges. First,  the algorithm is not applicable  to the vehicle sharing problem considered in this paper since the state transition probabilities are not available explicitly (as discussed above). Second, when the size of the state and action spaces are large, updating the value iteration estimates is computationally intractable. 

To address these computation challenges, in the next  section we present a sampling-based two-phase $Q-$learning algorithm that approximates the solution to problem $\mathcal{OPT}$. Similar to the two-phase DP algorithm, in the first phase one updates the $Q-$function estimates for problem $\mathcal{FS}$ ``by sampling" the vehicle sharing model. Then, in a second phase, one updates the $Q-$function estimates for  problem $\mathcal{OPT}$ (recall that such $Q-$functions are referred to as functions $H$).

\section{Two Phase $Q-$learning}
In this section we present both synchronous and asynchronous versions of two-phase $Q-$learning to solve problem $\mathcal{OPT}$. In the synchronous version, the $Q-$function estimates of all state-action pairs are updated at each step. In contrast, in the asynchronous version, only the $Q-$function estimate of a sampled state-action pair is updated. Under mild assumptions, we show that both algorithms are asymptotically optimal. While the convergence rate of synchronous $Q-$learning is higher \cite{kearns1999finite}, asynchronous $Q-$learning is more computationally efficient. 

\subsection{Synchronous Two Phase $Q-$learning}

Suppose $Q_0(x,u)$  is an initial $Q-$function estimate such that $Q_0(x,u)=0$ for any $x\in\mathcal X$. At iteration $k\in\{0,1,\ldots\}$, the synchronous two-phase $Q-$learning algorithm samples $N$ states $(x^{\prime,1},\ldots,x^{\prime,N})$ and updates the $Q-$function estimates for each state-action pair $(x,u)\in\mathbf X'\times\mathbf U$ as follows:
{\small
\begin{alignat}{2}
& Q_{k+1}(x,u)= Q_{k}(x,u)+\zeta_{2,k}(x,u)\cdot\bigg(  \max\bigg\{\mathcal B(x),\mathbf{D}(x,u)+\nonumber\\
 & \frac{1}{N}\sum_{m=1}^{N}\min_{u^{\prime,m}\in\mathbf{U}(x^{\prime,m})} Q_{k}(x^{\prime,m},u^{\prime,m})\bigg\}-  Q_{k}(x,u)\bigg),\label{eq:update_Q_1}\\
 &H_{k+1}(x,u)=H_{k}(x,u)+\zeta_{1,k}(x,u)\cdot \bigg( \mathbf{R}(x,u)+ \nonumber\\
&\frac{1}{N}\!\sum_{m=1}^{N}\max_{u^{\prime,m}\in \mathbf U_{\text{FS}}(Q_k,x^{\prime,m})}\!\!H_{k}(x^{\prime,m},u^{\prime,m}) \!-\! H_{k}(x,u)\!\bigg).\label{eq:update_Q_2}
\end{alignat}
}

The step size pair $(\zeta_{1,k}(x,u),\zeta_{2,k}(x,u))$ follows the update rule
\begin{equation}
\begin{split}
&\sum_k \zeta_{1,k}(x,u) = \sum_k \zeta_{2,k}(x,u) =\infty, \\
&\sum_k \zeta_{1,k}^2(x,u)<\infty,\;\sum_k \zeta_{2,k}^2(x,u)<\infty, \\
&\zeta_{1,k}(x,u) = o\big(\zeta_{2,k}(x,u)\big).\label{eq:step_incre}
\end{split}
\end{equation}

The last equation implies that the $Q-$function update $Q_{k}(x,u)$ is on the fast time scale, and the $Q-$function update $H_{k}(x,u)$ is on the slow time scale. 
Notice that in the sampling approach, the state trajectory will enter the absorbing set $\mathcal X$ in $T$ steps. While the convergence of $Q-$learning is a polynomial of $|\mathbf X|$ and $|\mathbf U|$ (see the finite sample analysis in Theorem 1 of \cite{kearns1999finite}), in order to get an accurate estimate of the $Q-$function one needs more state-action samples from the transient state space. However once the state trajectory enters $\mathcal X$, it will never visit the transient state space $\mathbf X'$ again. To collect more samples from the transient state space, similar to the approaches adopted by sampling based methods in \cite{yu2006least}, \cite{egloff2005monte}, here we reset the state to its initial condition immediately after it enters the absorbing set.
The convergence result for the synchronous two-phase $Q-$learning algorithm is given in the following theorem.
\begin{theorem}[Convergence of Synchronous $Q$-learning]\label{thm:syn_Q}
Suppose the step-sizes $(\zeta_{1,k}(x,u),\zeta_{2,k}(x,u))$ follow the update rule in \eqref{eq:step_incre}. Then the sequence of $Q-$function estimates computed via synchronous two-phase $Q-$learning converges to the optimal $Q-$function pair $(Q^*(x,u),H^*(x,u))$  component-wise with probability $1$.
\end{theorem}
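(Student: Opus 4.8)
The plan is to read \eqref{eq:update_Q_1}--\eqref{eq:update_Q_2} as a two-timescale stochastic approximation recursion and to prove component-wise convergence one timescale at a time, invoking classical convergence results for Q-learning in episodic (stochastic shortest path) problems together with the uniqueness of $Q^*$ and $H^*$ already established in Theorems~\ref{thm:opt} and~\ref{thm:opt_2}. Write the two updates as
\[
Q_{k+1}(x,u) = Q_k(x,u) + \zeta_{2,k}(x,u)\bigl(\mathbf F[Q_k](x,u) - Q_k(x,u) + M^{(2)}_{k+1}(x,u)\bigr),
\]
\[
H_{k+1}(x,u) = H_k(x,u) + \zeta_{1,k}(x,u)\bigl(G[H_k,Q_k](x,u) - H_k(x,u) + M^{(1)}_{k+1}(x,u)\bigr),
\]
where $G[H,Q](x,u) := \mathbf R(x,u) + \sum_{x'\in\mathbf X'}\mathbb P(x'|x,u)\max_{u'\in\mathbf U_{\text{FS}}(Q,x')}H(x',u')$, so that $G[\,\cdot\,,Q^*]=\mathbf F_R$. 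The first point is that the empirical successor averages in \eqref{eq:update_Q_1}--\eqref{eq:update_Q_2} are conditionally unbiased estimators of $\sum_{x'\in\mathbf X'}\mathbb P(x'\mid x,u)(\cdot)$ — each sampled $x^{\prime,m}$ is drawn from $\mathbb P(\cdot\mid x,u)$, and an absorbed sample contributes $0$ since $Q_k$ and $H_k$ vanish on $\mathcal X$ — hence $M^{(1)}_{k+1}$ and $M^{(2)}_{k+1}$ are martingale differences with respect to the natural filtration, with conditional second moments bounded by an affine function of $\|Q_k\|^2$, $\|H_k\|^2$ because $\mathbf D$ and $\mathbf R$ are bounded on the finite space $\mathbf X\times\mathbf U$.

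\emph{Fast timescale.} The $Q$-recursion does not involve $H$, so it is a standalone Q-learning iteration for the operator $\mathbf F$. Because the counter component $k_t$ of the state strictly increases until absorption at time $T$, the transition structure is acyclic: every successor of a transient state lies one level higher, so the $Q$-dependence of $\mathbf F$ is annihilated after $T$ compositions. Equivalently, $\mathbf F$ is a weighted-sup-norm pseudo-contraction with the unique fixed point $Q^*$ of Theorem~\ref{thm:opt}, and the iterates can in fact be shown to converge level by level, starting from the states with $k_t = T$ and proceeding backward; this also gives a.s.\ boundedness of $\{Q_k\}$ (and $|Q^*|\le (T{+}1)\sup|\mathbf D|$). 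Combined with the Robbins--Monro conditions $\sum_k\zeta_{2,k}=\infty$, $\sum_k\zeta_{2,k}^2<\infty$ from \eqref{eq:step_incre}, the standard stochastic-approximation/Q-learning convergence theorem (Tsitsiklis 1994; Bertsekas--Tsitsiklis, \emph{Neuro-Dynamic Programming}; Jaakkola--Jordan--Singh 1994) yields $Q_k(x,u)\to Q^*(x,u)$ a.s., component-wise.

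\emph{Slow timescale.} On the probability-one event $\{Q_k\to Q^*\}$, the finitely many comparisons defining the refined feasible set stabilize: for all $k$ large enough, $\arg\min_{u'}Q_k(x,u')=\arg\min_{u'}Q^*(x,u')$ and the ``$Q(x,u)=0$'' selection isolates exactly the feasible actions, so $\mathbf U_{\text{FS}}(Q_k,x)=\mathbf U_{\text{FS}}(Q^*,x)$ for every $x\in\mathbf X'$. Beyond this random but a.s.\ finite time the $H$-recursion is precisely Q-learning for $\mathbf F_R$, which by Theorem~\ref{thm:opt_2} has the unique fixed point $H^*$ and, by the same acyclic-counter argument, is a weighted-sup-norm contraction; with $\{H_k\}$ a.s.\ bounded and $\sum_k\zeta_{1,k}=\infty$, $\sum_k\zeta_{1,k}^2<\infty$, the same theorem gives $H_k(x,u)\to H^*(x,u)$ a.s. The role of the separation $\zeta_{1,k}=o(\zeta_{2,k})$ in \eqref{eq:step_incre} is to make this rigorous through Borkar's two-timescale framework: from the viewpoint of the slow iteration the fast iteration is asymptotically equilibrated at $Q^*$, so the limiting ODE for $H$ is $\dot H=\mathbf F_R[H]-H$, globally asymptotically stable at $H^*$ — this is what is needed if the feasible-set selection is implemented with a shrinking tolerance rather than an exact equality test, and it is what carries over to the asynchronous version.

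\emph{Main obstacle.} I expect the delicate step to be the stabilization of $\mathbf U_{\text{FS}}(Q_k,x)$: the defining condition includes the exact equality $Q(x,u)=0$, which the approximations need not satisfy, so a clean finite-time-coincidence argument requires a mild non-degeneracy condition (at each reachable transient state, the $Q^*$-values of infeasible actions are bounded away from those of feasible ones, which are exactly $0$) or, more robustly, a vanishing-tolerance version of the selection rule under which one must track the perturbed iteration via the two-timescale ODE limit. Once the feasible set is pinned down, the remaining ingredients — unbiased martingale noise, a.s.\ boundedness of the iterates, and the contraction properties of $\mathbf F$ and $\mathbf F_R$ — are routine consequences of the finite-horizon, finite-space structure and Theorems~\ref{thm:opt}--\ref{thm:opt_2}.
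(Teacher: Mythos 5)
Your proposal mirrors the paper's own proof: a two-timescale stochastic-approximation argument in which the fast $Q$-iteration converges to $Q^*$ via the weighted-sup-norm contraction induced by the time counter (weights $\xi(x)=T-t$, modulus $\beta=(T-1)/T$), and the slow $H$-iteration, with the feasible set evaluated at the converged $Q^*$, converges to the fixed point of $\mathbf F_R$, both steps being justified by the Bertsekas--Tsitsiklis stochastic-approximation results (Propositions 4.4 and 4.5) exactly as you invoke. The obstacle you flag --- stabilization of $\mathbf U_{\text{FS}}(Q_k,\cdot)$ despite the exact-equality test $Q^*(x,u)=0$ --- is not resolved in the paper either, which simply substitutes $Q^*$ into the slow update by appeal to the timescale separation, so your treatment is, if anything, more explicit about that delicate step.
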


After both $Q-$functions converge, a near-optimal policy can be computed as
\begin{equation}\label{eq:pol_opt}
\widetilde\mu^*(x) \in\arg\min_{u\in\mathbf U_{\text{FS}}(Q_{\bar k},x)} H_{\bar k}(x,u),\,\,\forall x\in\mathbf X',
\end{equation}
where $\bar k$ is the iteration index when the leaning is stopped.

\subsection{Asynchronous Two-Phase $Q-$learning}
Suppose $Q_0(x,u)$ is an initial $Q-$function estimate such that $Q_0(x,u)=0$ for any $x\in\mathcal X$. At iteration $k\in\{0,1,\ldots\}$ and state $x_k \in\mathbf X$, the asynchronous two-phase $Q-$learning algorithm (1) generates a control action
\begin{equation}\label{eq:control_opt}
u_k \in\arg\min_{u\in\mathbf U_{\text{FS}}(Q_k,x_k)} H_{k}(x_k,u),
\end{equation}
(2) samples $N$ states $(x^{\prime,1},\ldots,x^{\prime,N})$, and (3) updates the $Q-$function estimates as follows:
\begin{itemize}
\item for $x=x_k$ and $u=u_k$, $Q-$function estimates are updated according to equations \eqref{eq:update_Q_1} and \eqref{eq:update_Q_2},
\item otherwise, the $Q-$function estimates are equal to their previous values, i.e.,
\[
Q_{k+1}(x,u)=Q_{k}(x,u),\,\,H_{k+1}(x,u)=H_{k}(x,u).
\]
\end{itemize}

The convergence result for the asynchronous two-phase $Q-$learning algorithm is given in the following theorem.

\begin{theorem}[Convergence of Asynchronous $Q$-learning]\label{thm:asy_Q}
Suppose the step-sizes $(\zeta_{1,k}(x,u),\zeta_{2,k}(x,u))$ follow the update rule in \eqref{eq:step_incre}. Also suppose each state action pair $(x,u)\in\mathbf X\times\mathbf U$ is visited infinitely often. Then, the sequence of $Q-$function estimates computed via asynchronous two-phase $Q-$learning converges to the optimal $Q-$function pair $(Q^*(x,u),H^*(x,u))$ with probability $1$.
\end{theorem}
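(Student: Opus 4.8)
The plan is to recast the two coupled recursions \eqref{eq:update_Q_1}--\eqref{eq:update_Q_2} as an asynchronous two-timescale stochastic approximation scheme and then invoke the Bellman optimality properties already established in Theorems~\ref{thm:opt} and \ref{thm:opt_2}. First I would make the asynchrony explicit: for each transient pair $(x,u)\in\mathbf X'\times\mathbf U$ introduce the local clock $\nu_k(x,u)=\sum_{j\le k}\mathbf 1\{(x_j,u_j)=(x,u)\}$, so that the $(x,u)$ component is refreshed at global time $k$ only if $(x_k,u_k)=(x,u)$, in which case it uses the effective step sizes $\zeta_{i,\nu_k(x,u)}(x,u)$. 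The hypothesis that every pair is visited infinitely often guarantees $\nu_k(x,u)\uparrow\infty$, which is precisely what makes the usual local-clock reduction of asynchronous to synchronous stochastic approximation go through. I would also observe that resetting the state to $x_0$ whenever it enters $\mathcal X$ is harmless: it is the mechanism that enforces the infinitely-often hypothesis, it only reorders the sequence of refreshed pairs, and the sampled successors $(x^{\prime,1},\dots,x^{\prime,N})$ still generate a martingale-difference noise with respect to the natural filtration.

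For the fast timescale I would argue that the $Q$-recursion \eqref{eq:update_Q_1} is self-contained (it does not involve $H$), so it is exactly an asynchronous $Q$-learning iteration for the operator $\mathbf F$. Since $\mathbf D$ is uniformly bounded and the counter component $k_t=t$ forces absorption into $\mathcal X$ after exactly $T$ stages, $\mathbf F$ has the stochastic shortest path structure: its $T$-fold composition collapses onto its fixed point, equivalently $\mathbf F$ is a pseudo-contraction in a sup-norm weighted by weights decaying with the counter value, and by Theorem~\ref{thm:opt} its unique fixed point is $Q^*$. The driving term $\frac1N\sum_{m=1}^N\min_{u'}Q_k(x^{\prime,m},u')-\mathbb E[\,\cdot\mid\mathcal F_k]$ is a martingale difference with conditionally bounded second moment, the iterates stay bounded by an a priori estimate using boundedness of $\mathbf D$ and the finite horizon, and the conditions in \eqref{eq:step_incre} supply the required Robbins--Monro step sizes. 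Hence the classical asynchronous $Q$-learning convergence theorem (whose finite-sample refinement appears in \cite{kearns1999finite}) yields $Q_k\to Q^*$ componentwise almost surely, irrespective of the slow iterate.

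The step I expect to be the real obstacle is showing that the \emph{refined} feasible set eventually stops changing. The map $Q\mapsto\mathbf U_{\text{FS}}(Q,\cdot)$ is built from an $\arg\min$ together with a level-set condition and is therefore only upper hemicontinuous, so $Q_k\to Q^*$ does not by itself give $\mathbf U_{\text{FS}}(Q_k,\cdot)=\mathbf U_{\text{FS}}(Q^*,\cdot)$. I would exploit the finiteness of $\mathbf X$ and $\mathbf U$: for every pair $(x,u)$ with $u\notin\arg\min_{u'}Q^*(x,u')$ there is a uniform time past which $u\notin\arg\min_{u'}Q_k(x,u')$, and, combining this with feasibility of $\mathcal{OPT}$ (Lemma~\ref{tech_lem1}, i.e.\ the Phase~1 optimal cost equals zero) so that the $\arg\min$ criterion and the vanishing-$Q^*$ criterion become consistent, one obtains an almost surely finite random time $K$ with $\mathbf U_{\text{FS}}(Q_k,x)=\mathbf U_{\text{FS}}(Q^*,x)$ for every $x\in\mathbf X'$ and every $k\ge K$; a mild non-degeneracy assumption ruling out pathological ties in $Q^*$ may be required to make this clean.

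Finally, the slow timescale. On the event $\{k\ge K\}$ the $H$-recursion \eqref{eq:update_Q_2}, together with the greedy action rule \eqref{eq:control_opt}, coincides with an asynchronous $Q$-learning iteration for the operator $\mathbf F_R$ evaluated on the now-fixed refined set $\mathbf U_{\text{FS}}(Q^*,\cdot)$; equivalently, since $\zeta_{1,k}=o(\zeta_{2,k})$ renders $H$ quasi-static while $Q_k$ converges, the two-timescale lemma lets one analyse the slow component with $Q_k$ frozen at its limit $Q^*$. The operator $\mathbf F_R$ again has stochastic shortest path structure ($\mathbf R$ bounded, deterministic horizon $T$), hence is a weighted-norm pseudo-contraction whose unique fixed point is $H^*$ by Theorem~\ref{thm:opt_2}; the sampling noise is once more a bounded-variance martingale difference and the iterates remain bounded. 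A second application of the asynchronous $Q$-learning convergence theorem gives $H_k\to H^*$ componentwise almost surely, and together with the fast-timescale conclusion this establishes the componentwise almost-sure convergence of $(Q_k,H_k)$ to $(Q^*,H^*)$ claimed in Theorem~\ref{thm:asy_Q}.
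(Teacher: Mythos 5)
Your proposal is correct and follows essentially the same route as the paper's proof: a two-timescale asynchronous stochastic approximation argument in which the fast $Q$-recursion converges to $Q^*$ because the operator $\mathbf F$ is a weighted-sup-norm contraction (the weights $\xi(x)=T-t$ exploiting the deterministic-horizon/absorption structure) and the sampling noise is a bounded-variance martingale difference, and the slow $H$-recursion is then analysed with $Q$ frozen at its limit and shown to converge to $H^*$ by the same contraction-plus-noise machinery; the paper implements the asynchrony by composing the per-update operator over intervals in which every pair is refreshed at least once (via Borel--Cantelli and the infinitely-often assumption) and then invokes Propositions 4.4/4.5 of Bertsekas--Tsitsiklis, which is the same standard reduction you invoke via local clocks. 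The one genuine difference is the step you flag as the "real obstacle": the paper does not address it at all. In its Step 2 it simply replaces $\mathbf U_{\text{FS}}(Q_k,\cdot)$ by $\mathbf U_{\text{FS}}(Q^*,\cdot)$ on the grounds that $Q$ converges faster, whereas the actual iterate \eqref{eq:update_Q_2}--\eqref{eq:control_opt} uses $\mathbf U_{\text{FS}}(Q_k,\cdot)$, a map that is not continuous in $Q$ and, because of the exact level-set condition $Q(x,u)=0$ in its definition, can even be empty for finite $k$ when $Q_k$ is close to but not equal to $Q^*$. Your observation that $Q_k\to Q^*$ alone does not give eventual coincidence of the refined sets, and your proposed fix (finite state-action space, eventual exclusion of strictly suboptimal actions, consistency of the argmin and zero-level criteria via Lemma \ref{tech_lem1}, plus a non-degeneracy assumption to rule out ties and a tolerance-type reinterpretation of the zero condition), is a legitimate additional argument that the paper's proof would in fact need to be fully rigorous; your sketch of it is not fully executed, but identifying and outlining it goes beyond what the paper provides rather than falling short of it.
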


Note that the convergence result relies on the assumption that each state-action pair $(x,u)\in\mathbf X\times\mathbf U$ is visited infinitely often. While this is a standard assumption in the $Q-$learning literature \cite{bertsekas1995neuro}, by following analogous arguments as in \cite{strehl2006pac}, the above result can proven under milder assumptions by using PAC analysis. As for synchronous two-phase $Q-$learning, a near optimal policy can be computed by \eqref{eq:pol_opt} after the $Q-$functions converge.

In the next section, we perform numerical experiments to compare the proposed two-phase $Q-$learning algorithms to a number of alternative approaches. In particular, we consider a Lagrangian relaxation method \cite{altman1999constrained}, whereby one transforms problem $\mathcal{OPT}$ into a min-max MDP and solve for the optimal saddle point. However, finding the optimal Lagrange multiplier is a challenging problem. While multi-scale stochastic approximation algorithms such as actor-critic \cite{borkar2005actor} are available for optimizing both the Lagrange multiplier and policy online, in order to update the Lagrange multiplier, one requires sequential gradient approximations. This makes the convergence of these algorithms very sensitive to the multiple step-sizes and thus non-robust to large scale problems. Further numerical insights are provided below.

%
%

\subsection{Numerical Results}
Consider a small vehicle sharing system that consists of $15$ vehicles ($C=15$), $5$ stations ($S=5$), and a horizon of $6$ hours ($T=6$). In problem $\mathcal{OPT}$, one aims to find an optimal assignment strategy that maximizes the total revenue subject to the vehicle utilization constraint. The constraint threshold $\mathbf d$ is set equal to $38\times T$ to ensure that the average utilization time of each vehicle is at least $2.5$ hours. In our comparative study, we consider (1) a $Q-$learning  algorithm \cite{kearns1999finite} that  maximizes  total revenue and does not take into account the utilization constraint; (2) a penalized $Q-$learning algorithm, which maximizes a combined utility function of revenue and constraint violation penalty; (3) a $Q-$learning algorithm with Lagrangian update \cite{borkar2005actor}, which approximately solves problem $\mathcal{OPT}$ using an actor-critic method; and (4) the proposed two-phase $Q-$learning algorithm. Performance of these algorithms is evaluated via $1000$ Monte Carlo trials, for which the corresponding empirical rewards and constraint costs are shown in Figures \ref{fig:reward} and \ref{fig:constraint} respectively. The policy computed via $Q-$learning returns the highest total revenue, but the average vehicle utilization time is only $1.7$ hours.  On the other hand, the computed  policies from two-phase $Q-$learning and $Q-$learning with Lagrangian update decrease total revenue by $20\%$ but guarantee that average vehicle utilization time is over $2.5$ hours. Also, one can note that the proposed two-phase $Q-$learning algorithm converges faster than the $Q-$learning algorithm with Lagrangian update. Finally, the policy from penalized $Q-$learning\footnote{Here we perform grid search on the penalty parameter in order to maximize the total revenue while satisfying the vehicle utilization constraint.} has the highest average vehicle utilization time ($2.73$ hours) but lowest total revenue (with a $28\%$ gap). As a further comparison, a greedy policy that assigns as many rentals as possible provides a total revenue as low as $33.05$.

 \begin{figure}[!h]
   \begin{center}
     \includegraphics[width=\linewidth]{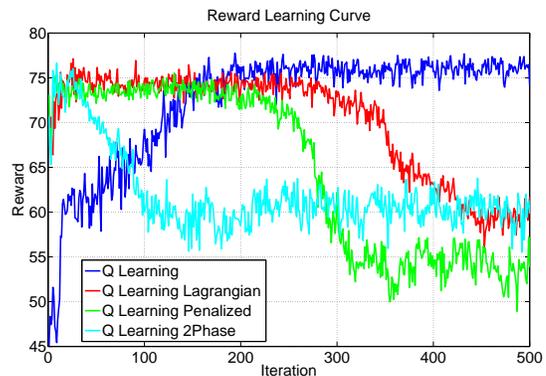}
   \end{center}
   \vspace{-0.3in}
   \caption{Reward Curve for Various Assignment Methods.} \label{fig:reward}
 \end{figure}
 \begin{figure}[!h]
   \begin{center}
     \includegraphics[width=\linewidth]{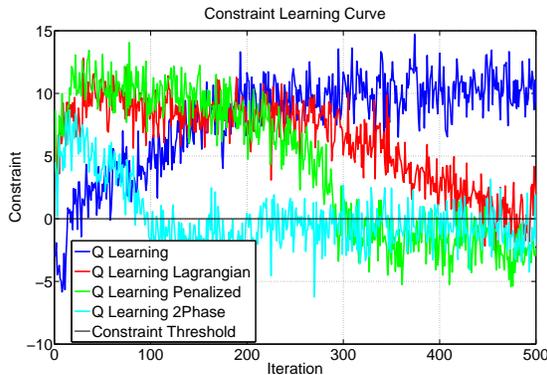}
   \end{center}
   \vspace{-0.3in}
   \caption{Constraint Curve for Various Assignment Methods.} \label{fig:constraint}
 \end{figure}
 
In each iteration of the two-phase $Q-$learning algorithm, the assignment problem in \eqref{eq:control_opt} is cast as a \emph{bilinear integer linear programming} (BILP) problem. Although BILP problems are NP-hard in general, with readily available optimization packages such as CPLEX \cite{cplex2009v12}, our algorithm is capable of solving medium-scale problems with up to $50$ vehicles, $20$ stations, and a horizon of $12$ hours. We believe there is still ample room for improvement by leveraging parallelization and characterizing the $Q-$functions with function approximations. 

\section{Conclusion}

In this paper, we propose a novel CMDP model for  one-way vehicle sharing systems whereby  the real-time rental assignment of vehicles relies on an auction-style bidding paradigm. We rigorously derive a two-phase Bellman optimality condition for the CMDP and show that this problem can be solved (conceptually) using two-phase dynamic programming. Building upon this result, we propose a practical sampling-based two-phase $Q-$learning algorithm  and show that the solution converges asymptotically to the value function of the CMDP.

Future work includes: {\bf 1)} Providing a convergence rate for our two-phase $Q-$learning algorithm; {\bf 2)} Generalizing the proposed bidding mechanism by using market-design mechanisms \cite{krishna2009auction} and game theory \cite{nisan2007algorithmic}; and {\bf 3)} Evaluating our algorithm on a large-scale vehicle sharing system. 

\section{Acknowledgements}
 Y.-L. Chow and M. Pavone are partially supported by Croucher Foundation Doctoral Scholarship, the Office of Naval Research, Science of Autonomy Program, under Contract N00014-15-1-2673, and by the National Science Foundation under CAREER Award CMMI-1454737. 

\bibliographystyle{plain} 
\bibliography{bidding}

\onecolumn
\appendix
\section{Appendix: Technical Proofs}
\subsection{Proof of Lemma \ref{tech_lem1}}
First notice that 
\[
\max\left\{0,\mathbb E\left[\sum_{t=0}^{\infty}\mathbf{D}(x_{t},u_t)\mid x_0,\mu\right]\right\}\geq 0.
\]
Thus for any minimizer $\mu^*$ of problem $\mathcal{FEA}$ such that the solution is $0$, it directly implies that 
\[
\mathbb E\left[\sum_{t=0}^{\infty}\mathbf{D}(x_{t},u_t)\mid x_0,\mu^*\right]\leq 0,
\] 
i.e., $\mu^*$ is a feasible policy of problem $\mathcal{OPT}$.

On the other hand, suppose a control policy $\mu$ is feasible to problem $\mathcal{OPT}$, i.e., 
\[
\mathbb E\left[\sum_{t=0}^{\infty}\mathbf{D}(x_{t},u_t)\mid x_0,\mu\right]\leq 0.
\]
This implies that 
\[
\max\left\{0,\mathbb E\left[\sum_{t=0}^{\infty}\mathbf{D}(x_{t},u_t)\mid x_0,\mu\right]\right\}= 0,
\]
Therefore $\mu$ is a minimizer to problem $\mathcal{FEA}$ because the objective function of this problem is always non-negative. 

\subsection{Technical Properties of Bellman Operators}
The Bellman operator $\mathbf T$ has the following properties.
\begin{lemma}\label{prop:basic_prop}
The Bellman operator $\mathbf T[V]$ has the following properties:
\begin{itemize}
\item (Monotonicity) If $V_1(x)\geq V_2(x)$, for any $x\in\mathbf{X}$, then $\mathbf T[V_1](x)\geq \mathbf T[V_2](x)$.
\item (Translational Invariant) For any constant $K\in\mathbb R$, $\mathbf T[V](x)-|K|\leq\mathbf T[V+K](x)\leq\mathbf T[V](x)+|K|$, for any $x\in\mathbf X'$.
\item (Contraction) There exists a positive vector $\{\xi(x)\}_{x\in\mathbf X}$ and a constant $\beta\in(0,1)$ such that $\|\mathbf T[V_1]-\mathbf T[V_2]\|_{\xi}\leq  \beta\|V_1-V_2\|_{\xi}$\footnote{$\|f\|_{\xi}=\max_{x\in{\mathbf{X}}} |f(x)|/\xi(x)$}.
\end{itemize}
\end{lemma}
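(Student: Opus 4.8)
The plan is to verify the three properties directly from the definition
\[
\mathbf T[V](x)= \min_{u\in\mathbf U(x)}\max\left\{\mathcal B(x),\,\mathbf{D}(x,u)+\sum_{x'\in\mathbf X'}\mathbb P(x'|x,u)V(x')\right\},
\]
exploiting the fact that $\min_u$, $\max\{\mathcal B(x),\cdot\}$, and the expectation $\sum_{x'}\mathbb P(x'|x,u)(\cdot)$ are each monotone, and that the expectation and $\max\{\mathcal B(x),\cdot\}$ are nonexpansive (in the appropriate sense). The only genuinely delicate point is the contraction property, which fails in the usual sup-norm because there is no discount factor; it must instead be established in a weighted norm $\|\cdot\|_\xi$ exploiting that $\mathbf X'$ is a transient class under every policy, so that the state is absorbed into $\mathcal X$ in at most $T$ steps.

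\emph{Monotonicity.} Suppose $V_1(x)\ge V_2(x)$ for all $x\in\mathbf X$. Since $\mathbb P(x'|x,u)\ge 0$, the quantity $\mathbf{D}(x,u)+\sum_{x'}\mathbb P(x'|x,u)V_1(x')$ dominates the corresponding quantity with $V_2$, for each fixed $(x,u)$. Taking $\max\{\mathcal B(x),\cdot\}$ preserves this inequality (since $t\mapsto\max\{c,t\}$ is nondecreasing), and finally taking $\min_{u\in\mathbf U(x)}$ of two pointwise-ordered functions of $u$ preserves the inequality. Hence $\mathbf T[V_1](x)\ge\mathbf T[V_2](x)$.

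\emph{Translational invariance.} For a constant $K$, we have $V(x')-|K|\le V(x')+K\le V(x')+|K|$ for every $x'$; since $\sum_{x'\in\mathbf X'}\mathbb P(x'|x,u)\le 1$, adding $K$ to the argument changes $\mathbf{D}(x,u)+\sum_{x'}\mathbb P(x'|x,u)V(x')$ by at most $|K|$ in absolute value. Applying the nonexpansive maps $\max\{\mathcal B(x),\cdot\}$ and then $\min_{u}$ (both are $1$-Lipschitz and order-preserving) yields $\mathbf T[V](x)-|K|\le \mathbf T[V+K](x)\le\mathbf T[V](x)+|K|$ for all $x\in\mathbf X'$. (Using monotonicity together with the bound applied to the constant functions $V\pm|K|$ gives the same conclusion more quickly.)

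\emph{Contraction.} This is the main obstacle. The idea is the standard stochastic-shortest-path argument: because every state in $\mathbf X'$ reaches the absorbing set $\mathcal X$ within $T$ steps under any admissible policy, there is $\rho<1$ with $\max_{x\in\mathbf X',u}\sum_{x'\in\mathbf X'}\mathbb P(x'|x,u)\le$ (suitably aggregated) less than one over $T$-step windows. Concretely, let $h(x)$ be an upper bound on the expected number of steps to absorption from $x$ (finite and bounded by $T$ on $\mathbf X'$), and set $\xi(x)= h(x)+\epsilon$ for small $\epsilon>0$ (and $\xi(x)=\epsilon$ on $\mathcal X$), so that $\sum_{x'\in\mathbf X'}\mathbb P(x'|x,u)\xi(x')\le \beta\,\xi(x)$ for some $\beta\in(0,1)$ uniformly in $(x,u)$. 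Then, fixing $x$ and the minimizing $u$ for $\mathbf T[V_1](x)$, and using that $\max\{\mathcal B(x),\cdot\}$ is $1$-Lipschitz,
\[
\mathbf T[V_1](x)-\mathbf T[V_2](x)\le \sum_{x'\in\mathbf X'}\mathbb P(x'|x,u)\big(V_1(x')-V_2(x')\big)\le \|V_1-V_2\|_\xi\sum_{x'\in\mathbf X'}\mathbb P(x'|x,u)\xi(x')\le \beta\,\xi(x)\,\|V_1-V_2\|_\xi,
\]
and symmetrically with the roles of $V_1,V_2$ reversed; dividing by $\xi(x)$ and taking the max over $x$ gives $\|\mathbf T[V_1]-\mathbf T[V_2]\|_\xi\le\beta\|V_1-V_2\|_\xi$. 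I expect the bookkeeping to show $h$ is well defined and bounded (it suffices to note $\mathbf X'\subseteq\{k_t<T\}$ and each transition increments $k_t$, so absorption occurs in exactly $T-k$ steps) to be the only nontrivial verification; everything else is the routine observation that the Bellman operator here is a composition of monotone nonexpansive maps.
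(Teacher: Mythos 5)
Your proposal is correct and takes essentially the same route as the paper: monotonicity and translational invariance read off the definition, and the contraction established in a weighted sup-norm whose weights are the (deterministic) time to absorption — your $\xi=h+\epsilon$ with $h(x)=T-k$ coincides, up to the harmless $\epsilon$ shift, with the paper's choice $\xi(x)=T-t$ and $\beta=(T-1)/T$. One small slip: to bound $\mathbf T[V_1](x)-\mathbf T[V_2](x)$ from above you should fix the minimizer attaining $\mathbf T[V_2](x)$ (not the one for $\mathbf T[V_1](x)$), or simply use $|\min_u f-\min_u g|\leq\max_u|f-g|$ as the paper does; your symmetric second half repairs this in any case.
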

\begin{proof}
The proof of monotonicity and constant shift properties follow directly from the definition of Bellman operator. Now we prove the contraction property. Recall that the $t-$element in state $x=(t,z,\omega)$ is a time counter, its transition probability is given by $\mathbf 1\{t'=t+1\}$ if $t<T-1$ and $\mathbf 1\{t'=t\}$ if $t=T-1$. Obviously the transition probability $\mathbb{P}(x^\prime|x,u)$, which is a multivariate probability distribution of state $x$, is less than or equal to the marginal probability distribution of $t-$element. Thus for vector $\{\xi(x)\}_{x\in\mathbf X}$ such that 
\begin{equation}\label{eq:xi}
\xi(x)=T-t\geq 0,\,\,\forall x\in\mathbf X, 
\end{equation}
we have that 
\[
\sum_{x'\in\mathbf X'}\xi(x')\mathbb{P}(x^\prime|x,u)\leq\sum_{x'\in\mathbf X'}\xi(x')\mathbf 1\{t'=t+1\}\leq \frac{T-1}{T}\xi(x),\,\,\forall x\in\mathbf X,\,\,\forall u\in\mathbf U(x).
\]
Here one observes that the effective ``discounting factor" is given by 
\begin{equation}\label{eq:beta}
\beta=\frac{T-1}{T}\in(0,1).
\end{equation}
Then for any vectors $V_1$, $V_2:\mathbf X\rightarrow\mathbb R$, 
\[
\begin{split}
\left|\mathbf T[V_1](x)-\mathbf T[V_2](x)\right|\leq&\max_{u\in\mathbf U}\left|\Pi_{\mathcal B(x)}\left(\mathbf{D}(x,u)+\sum_{x'\in\mathbf X'}\mathbb{P}(x^\prime|x,u)V_1(x')\right)-\Pi_{\mathcal B(x)}\left(\mathbf{D}(x,u)+\sum_{x'\in\mathbf X'}\mathbb{P}(x^\prime|x,u)V_2(x')\right) \right|\\
\leq &\max_{u\in\mathbf U}\left|\max\left\{\mathcal B(x),\sum_{x'\in\mathbf X'}\mathbb{P}(x^\prime|x,u)V_1(x')-\sum_{x'\in\mathbf X'}\mathbb{P}(x^\prime|x,u)V_2(x')\right\}\right|\\
\leq &\max_{u\in\mathbf U}\sum_{x'\in\mathbf X'}\mathbb{P}(x^\prime|x,u)|V_1(x')-V_2(x')|\\
\leq &\max_{x'\in\mathbf X'}\frac{|V_1(x')-V_2(x')|}{\xi(x')}\max_{u\in\mathbf U}\sum_{x'\in\mathbf X'}\xi(x')\mathbb{P}(x^\prime|x,u)\\
\leq &\max_{x'\in\mathbf X'}\frac{|V_1(x')-V_2(x')|}{\xi(x')}\beta\xi(x).
\end{split}
\]
This further implies that the following contraction property holds: $\|\mathbf T[V_1]-\mathbf T[V_2]\|_{\xi}\leq  \beta\|V_1-V_2\|_{\xi}$.
\end{proof}
Similarly, the Bellman operator $\mathbf T_R$ also has the following properties.
\begin{lemma}\label{prop:basic_prop_TR}
The Bellman operator $\mathbf T_R[V]$ is monotonic, translational invariant and it is a contraction mapping with respect to the $\|\cdot\|_\xi$ norm. 
\end{lemma}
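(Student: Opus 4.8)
The plan is to mirror, almost verbatim, the proof of Lemma \ref{prop:basic_prop} for the operator $\mathbf T$, since $\mathbf T_R$ has exactly the same structure: it is a one-step lookahead operator involving the same transition kernel $\mathbb P(x'|x,u)$, restricted to the transient states $\mathbf X'$, with a $\max$ over admissible actions in place of the $\min$. The three properties are proved in sequence. First, \emph{monotonicity}: if $W_1(x)\ge W_2(x)$ for all $x\in\mathbf X$, then inside the maximization $\mathbf R(x,u)+\sum_{x'\in\mathbf X'}\mathbb P(x'|x,u)W_1(x')\ge \mathbf R(x,u)+\sum_{x'\in\mathbf X'}\mathbb P(x'|x,u)W_2(x')$ for every $u\in\mathbf U_{\text{FS}}(Q^*,x)$, and taking the maximum over $u$ on both sides preserves the inequality. (One must note the admissible set $\mathbf U_{\text{FS}}(Q^*,x)$ is nonempty whenever problem $\mathcal{OPT}$ is feasible, which is the standing assumption in Phase 2.) Second, \emph{translational invariance}: for any constant $K$, $\sum_{x'\in\mathbf X'}\mathbb P(x'|x,u)\,(W(x')+K)=\sum_{x'\in\mathbf X'}\mathbb P(x'|x,u)W(x')+K\cdot\sum_{x'\in\mathbf X'}\mathbb P(x'|x,u)$, and since $\sum_{x'\in\mathbf X'}\mathbb P(x'|x,u)\le 1$, shifting $W$ by $K$ shifts $\mathbf T_R[W](x)$ by at most $|K|$ in either direction; taking the max over $u$ does not break this bound, giving $\mathbf T_R[W](x)-|K|\le\mathbf T_R[W+K](x)\le\mathbf T_R[W](x)+|K|$ for $x\in\mathbf X'$.

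Third, the \emph{contraction} property. I would reuse the same weight vector $\xi(x)=T-t$ from \eqref{eq:xi} and the same effective discount $\beta=(T-1)/T\in(0,1)$ from \eqref{eq:beta}; the key inequality $\sum_{x'\in\mathbf X'}\xi(x')\mathbb P(x'|x,u)\le\beta\,\xi(x)$ holds for all $x\in\mathbf X$ and all admissible $u$, exactly as established in the proof of Lemma \ref{prop:basic_prop}, because it depends only on the time-counter dynamics of the kernel $\mathbb P$, which are shared by both phases. Then for any $W_1,W_2:\mathbf X\to\mathbb R$, using that $|\max_u a(u)-\max_u b(u)|\le\max_u|a(u)-b(u)|$, the immediate reward $\mathbf R(x,u)$ cancels, and we get
\[
\left|\mathbf T_R[W_1](x)-\mathbf T_R[W_2](x)\right|\le\max_{u\in\mathbf U_{\text{FS}}(Q^*,x)}\sum_{x'\in\mathbf X'}\mathbb P(x'|x,u)\,|W_1(x')-W_2(x')|\le\|W_1-W_2\|_\xi\,\beta\,\xi(x),
\]
where the last step bounds $|W_1(x')-W_2(x')|\le\|W_1-W_2\|_\xi\,\xi(x')$ inside the sum and applies the weighted-kernel inequality. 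Dividing by $\xi(x)$ and taking the supremum over $x$ yields $\|\mathbf T_R[W_1]-\mathbf T_R[W_2]\|_\xi\le\beta\|W_1-W_2\|_\xi$.

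The only genuine subtlety — and hence the step I would be most careful about — is the restriction of the argument of $\mathbf T_R$ to $\mathbf X'$ together with the nonstandard, state-dependent admissible set $\mathbf U_{\text{FS}}(Q^*,x)$. For monotonicity and contraction, I must check that the same weight $\xi$ works even though the maximization now ranges over the refined set $\mathbf U_{\text{FS}}(Q^*,x)\subseteq\mathbf U(x)$ rather than all of $\mathbf U(x)$; this is immediate since restricting the feasible set only shrinks the range of $u$ over which we maximize, so every bound that held uniformly over $\mathbf U(x)$ in the proof of Lemma \ref{prop:basic_prop} still holds over the subset. I would also remark that, unlike $\mathbf T$, there is no $\max\{\mathcal B(x),\cdot\}$ clipping in $\mathbf T_R$ — the $\mathcal B$ clipping in $\mathbf T$ was handled via a projection $\Pi_{\mathcal B(x)}$ which is nonexpansive; its absence in $\mathbf T_R$ only makes the estimates simpler, since we drop that intermediate step entirely. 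With these observations, the proof is a direct transcription of Lemma \ref{prop:basic_prop}'s argument, so I would state it concisely and refer back to that proof for the shared weighted-kernel estimate.
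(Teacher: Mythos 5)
Your proposal is correct and takes essentially the same route as the paper, which simply notes that the proof of this lemma is identical to that of Lemma \ref{prop:basic_prop} and omits it. Your transcription — reusing the weight $\xi(x)=T-t$ from \eqref{eq:xi} and $\beta=(T-1)/T$ from \eqref{eq:beta}, observing that the weighted-kernel bound depends only on the time-counter dynamics, and noting that restricting to $\mathbf U_{\text{FS}}(Q^*,x)$ and dropping the $\mathcal B$-clipping only simplify the estimates — is exactly the intended argument.
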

The proof of this lemma is identical to the proof of Lemma \ref{prop:basic_prop} and is omitted for the sake of brevity.

\subsection{Proof of Theorem \ref{thm:opt}}
The first part of the proof is to show by induction that  for $x\in\mathbf X$,
\begin{equation}\label{induction_Vn}
V_N(x):=\mathbf T^N[V_{0}](x)=\min_{\mu}\Pi_{\mathcal B(x)}\left(\mathbb E\left[\sum_{t=0}^{N-1}\mathbf{D}(x_{t},u_t)+V_0(x_N)\mid x,\mu\right]\right).
\end{equation}
 For $N=1$, the definition of Bellman operator $\mathbf T$ implies that 
 \[
 V_1(x)=\mathbf T[V_{0}](x)=\min_{u\in\mathbf U(x)}\Pi_{\mathcal B(x)}(\mathbf{D}(x,u)+\mathbb E\left[  V_0(x')\mid x,\,u\right]).
 \]
By the induction hypothesis, assume \eqref{induction_Vn} holds at $N=k$. For $N=k+1$, 
\[
\begin{split}
&V_{k+1}(x):=\mathbf T^{k+1}[V_{0}](x)=\mathbf T[V_k](x)\\
=& \min_{u\in\mathbf U(x)}\Pi_{\mathcal B(x)}\left({\mathbf{D}}(x,u)+\sum_{x'\in\mathbf X'}\mathbb P(x'|x,u)\left[\Pi_{\mathcal B(x')}\left(\min_{\mu}\mathbb E\left[\sum_{t=0}^{k-1} {\mathbf{D}}(x_t,u_t)+V_0(x_{k})\mid x',\mu\right]\right)\right]\right)\\
=& \min_{u\in\mathbf U(x)}\max\Bigg\{\mathcal B(x),{\mathbf{D}}(x,u)+\sum_{x'\in\mathbf X'}\mathbb P(x'|x,u)\left[\max\Bigg\{-\infty,\min_{\mu}\mathbb E\left[\sum_{t=0}^{k-1} {\mathbf{D}}(x_t,u_t)+V_0(x_{k})\mid x',\mu\right]\Bigg\}\right]\Bigg\}\\
=&\min_{u\in\mathbf U(x)}\max\Bigg\{\mathcal B(x),{\mathbf{D}}(x,u)+\sum_{x'\in\mathbf X'}\mathbb P(x'|x,u)\left[\min_{\mu}\mathbb E\left[\sum_{t=0}^{k-1} {\mathbf{D}}(x_t,u_t)+V_0(x_{k})\mid x',\mu\right]\right]\Bigg\}\\
=&\min_{u\in\mathbf U(x)}\Pi_{\mathcal B(x)}\left(\mathbf{D}(x,u)+\sum_{x'\in\mathbf X'}\mathbb P(x'|x,u)\left[\min_{\mu}\mathbb E\left[\sum_{t=1}^{k} {\mathbf{D}}(x_t,u_t)+V_0(x_{k+1})\mid x',\mu\right]\right]\right)\\
=&\min_{\mu}\Pi_{\mathcal B(x)}\left(\mathbb E\left[\sum_{t=0}^{k} {\mathbf{D}}(x_t,u_t)+V_0(x_{k+1})\mid x,\mu\right]\right).
\end{split}
\]
Thus, the equality in \eqref{induction_Vn} is proved by induction. 

The second part of the proof is to show that $V^*(x_0)
:=\lim_{N\rightarrow\infty} V_{N}(x_0)$ and equation \eqref{eq:equiv} holds. Since $V_0(x)$ is bounded for any $x\in{\mathbf{X}}$, the first argument implies that
\[
\begin{split}
 V^*(x_0)= & \min_{\mu}\max\!\left\{\!0,\lim_{N\rightarrow\infty}\!\mathbb E\! \left[\sum_{t=0}^{N-1} \!{\mathbf{D}}(x_t,u_t)\!+V_0(x_N)\mid x_0,\mu\right]\right\}\\
\geq&  \min_{\mu}\max\left\{0,\mathbb E \left[\sum_{t=0}^{\infty} {\mathbf{D}}(x_t,u_t)\!\mid\! x_0,\mu\right]\right\}-\lim_{N\rightarrow\infty} \max_{x\in \mathbf X'}\mathbb P[x_N=x\mid x_0,\mu]\|V_0\|_\infty \\
\geq&\min_{\mu} \max\left\{0,\mathbb E\left[\sum_{t=0}^{\infty}\mathbf{D}(x_{t},u_t)\mid x_0,\mu\right]\right\}- \epsilon\|V_0\|_\infty.
\end{split}
\]
The first inequality is due to 1) $V_0$ is bounded and 2) $\mathbf{D}(x_{t},u_t)=0$ when $x_{t}$ is in the absorbing set $\mathcal X$. The second inequality follows from the fact that $x_t$ enters the absorbing set $\mathcal X$ after $T$ steps.
By similar arguments, one can also show that
\[
V^*(x_0)\leq \min_{\mu} \max\left\{0,\mathbb E\left[\sum_{t=0}^{\infty}\mathbf{D}(x_{t},u_t)\mid x_0,\mu\right]\right\}+ \epsilon\|V_0\|_\infty.
\]
Therefore, by taking $\epsilon\rightarrow 0$, the proof is completed.

The third part of the proof is to show the uniqueness of fixed point solution. Starting at $V_0:{\mathbf{X}}\rightarrow\mathbb R$ one obtains from iteration $V_{k+1}(x)=\mathbf T[V_k](x)$  that
\[
V_{k+1}(x)=\min_{u\in\mathbf U(x)}\max\left\{\mathcal B(x),D(x,u)+\mathbb E\left[V_k(x')\mid x,u\right]\right\}.
\]
By taking the limit, and noting that $V^*(x)=\lim_{k\rightarrow \infty}V_{k+1}(x)=\mathbf T[\lim_{k\rightarrow \infty}V_{k}](x)=\mathbf T[V^*](x)$, which implies $V$ is a fixed point of the Bellman equation. Furthermore, the fixed point is unique because if there exists a different fixed point $\widetilde V$, then $\mathbf T^k[\widetilde V](x)=\widetilde V(x)$ for any $k\geq 0$. As $k\rightarrow\infty$, one obtains $\widetilde V(x)= V^*(x)$ which yields a contradiction.  

\subsection{Proof of Theorem \ref{thm:syn_Q}}
The convergence proof of two phase $Q-$learning is split into the following two steps.
\noindent\paragraph{Step 1 (Convergence of $ Q-$update)}
We first show the convergence of $ Q-$update (feasible set update) in two phase $Q-$learning. 
Recall that the state-action Bellman operator $\mathbf F$ is given as follows:
\[
\mathbf F[Q](x,u)=\max\left\{\mathcal B(x),\mathbf{D}(x,u)+\sum_{x'\in\mathbf X'}\mathbb P(x'|x,u) \min_{u'\in\mathbf U(x')}Q(x',u')\right\}.
\]
Therefore, the $ Q-$update can be re-written as
\[
 Q_{k+1}(x,u)= (1-\zeta_{2,k}(x,u))Q_{k}(x,u)+\zeta_{2,k}(x,u) \left(\Pi_{\mathcal B(x)}\left(\mathbf{D}(x,u)+ \sum_{x'\in\mathbf X'}\mathbb P(x'|x,u)\min_{u'\in\mathbf{U}(x')} Q_{k}(x',u') \right)+N_k(x,u)\right),
 \]
where the noise term is given by
\begin{equation}\label{eq:noise_term}
N_k(x,u)=\Pi_{\mathcal B(x)}\left(\mathbf{D}(x,u)+\frac{1}{N}\sum_{m=1}^{N}\min_{u^{\prime,m}\in\mathbf{U}(x^{\prime,m})} Q_{k}(x^{\prime,m},u^{\prime,m})\right)-\Pi_{\mathcal B(x)}\left(\mathbf{D}(x,u)+ \sum_{x'\in\mathbf X'}\mathbb P(x'|x,u)\min_{u'\in\mathbf{U}(x')} Q_{k}(x',u') \right),
\end{equation}
for which  $N_k(x,u)\rightarrow 0$ as $k\rightarrow \infty$ and for any $k\in\mathbb N$, 
\[
N^2_k(x,u)\leq \left|\frac{1}{N}\sum_{m=1}^{N}\min_{u^{\prime,m}\in\mathbf{U}(x^{\prime,m})} Q_{k}(x^{\prime,m},u^{\prime,m}) -\sum_{x'\in\mathbf X'}\mathbb P(x'|x,u)\min_{u'\in\mathbf{U}(x')} Q_{k}(x',u')\right|^2\leq 2\max_{x,u}Q^2_k(x,u).
\]
Then the assumptions in Proposition 4.5 in \cite{bertsekas1995neuro} on the noise term $N_k(x,u)$ are verified. Furthermore, following the same analysis from 
 Proposition \ref{prop:basic_prop} that $\mathbf{T}$ is a contraction operator with respect to the $\xi$ norm, for any two state-action value functions $Q_1(x,u)$ and $Q_2(x,u)$, we have that
\begin{equation}\label{eq:contraction_Q}
\begin{split}
&\left|\Pi_{\mathcal B(x)}\left(\mathbf{D}(x,u)+ \sum_{x'\in\mathbf X'}\mathbb P(x'|x,u)\min_{u'\in\mathbf{U}(x')} Q_{1}(x',u') \right)-\Pi_{\mathcal B(x)}\left(\mathbf{D}(x,u)+ \sum_{x'\in\mathbf X'}\mathbb P(x'|x,u)\min_{u'\in\mathbf{U}(x')} Q_{2}(x',u') \right)\right|\\
\leq &\left| \sum_{x'\in\mathbf X'}\mathbb P(x'|x,u)\min_{u'\in\mathbf{U}(x')} Q_{1}(x',u') - \sum_{x'\in\mathbf X'}\mathbb P(x'|x,u)\min_{u'\in\mathbf{U}(x')} Q_{2}(x',u')\right|\\
\leq & \sum_{x'\in\mathbf X'}\mathbb P(x'|x,u)\max_{u'\in\mathbf{U}(x')} \left|Q_{1}(x',u')- Q_{2}(x',u')\right| \\
\leq & \sum_{x'\in\mathbf X'}\mathbb P(x'|x,u)\xi(x')\max_{x'\in\mathcal X}\max_{u'\in\mathbf{U}(x')} \frac{\left|Q_{1}(x',u')- Q_{2}(x',u')\right|}{\xi(x')}\leq \beta\xi(x) \left\|  Q_{1}- Q_{2}\right\|_\xi.
\end{split}
\end{equation}
Here $\|Q\|_\xi=\max_{x'\in\mathbf X}\max_{u'\in\mathbf{U}(x')} {\left|Q(x',u')\right|}/{\xi(x')}$ and $\beta\in(0,1)$ is given by \eqref{eq:beta} and $\xi$ is given by \eqref{eq:xi}.
The first inequality is due to the fact that projection operator $\Pi_{\mathcal B(x)}$ is non-expansive. The second inequality follows from triangular inequality and 
\[
\sum_{x'\in\mathbf X'}\mathbb P(x'|x,u)\left| \min_{u'\in\mathbf{U}(x')} Q_{1}(x',u')-\min_{u'\in\mathbf{U}(x')} Q_{2}(x',u')\right|\leq\sum_{x'\in\mathbf X'}\mathbb P(x'|x,u)\max_{u'\in\mathbf{U}(x')} \left|Q_{1}(x',u')- Q_{2}(x',u')\right|.
\]
The third inequality holds, due to the fact $\sum_{x'\in\mathbf X'}\mathbb P(x^\prime|x,u)\xi(x')\leq \beta\xi(x)$ for $\beta\in(0,1)$. Therefore the above expression implies that $\|\mathbf F[Q_1]-\mathbf F[Q_2]\|_\xi\leq \beta\left\|  Q_{1}- Q_{2}\right\|_\xi$ for some $\beta\in(0,1)$, i.e., $\mathbf F$ is a contraction mapping with respect to the $\xi$ norm. 

By combining these arguments, all assumptions in Proposition 4.5 in \cite{bertsekas1995neuro} are justified. This in turns implies the convergence of $\{Q_k(x,u)\}_{k\in\mathbb N}$ to $Q^*(x,u)$ component-wise, where $Q^*$ is the unique fixed point solution of $\mathbf F[Q](x,u)=Q(x,u)$.

\noindent\paragraph{Step 2 (Convergence of $H-$update)}
Now we show the convergence of $H-$update (objective function update) in two phase $Q-$learning. Since $ Q$ converges at a faster timescale than $H$, the $H-$update can be rewritten using the converged quantity, i.e., $ Q^*$, as follows:
\[
H_{k+1}(x,u)=H_{k}(x,u)+\zeta_{1,k}(x,u)\cdot \left( \mathbf{R}(x,u)+ \frac{1}{N}\sum_{m=1}^{N}\min_{u^{\prime,m}\in \mathbf U_{\text{feas}}(Q^*,x^{\prime,m})}H_{k}(x^{\prime,m},u^{\prime,m}) - H_{k}(x,u)\right)
\]
Recall that the state-action Bellman operator $\mathbf F_R$ is given as follows:
\[
\mathbf F_R[H](x,u)= \mathbf{R}(x,u)+\sum_{x^\prime\in \mathbf X'}\mathbb{P}(x^\prime|x,u)\min_{u'\in\mathbf U_{\text{feas}}(Q^*,x')}H\left(x^\prime,u^\prime\right).
\]
Therefore, the $ H-$update can be re-written as the following form:
\[
\begin{split}
H_{k+1}(x,u)=& (1-\zeta_{1,k}(x,u))H_{k}(x,u)\\
 &+\zeta_{1,k}(x,u) \left(\mathbf{R}(x,u)+\sum_{x^\prime\in \mathbf X'}\mathbb{P}(x^\prime|x,u)\min_{u'\in\mathbf U_{\text{feas}}(Q^*,x')}H_k\left(x^\prime,u^\prime\right)+\mathcal{N}_k(x,u)\right),
 \end{split}
 \]
where the noise term is given by
\begin{equation}\label{eq:noise_term2}
\mathcal{N}_k(x,u)=\frac{1}{N}\sum_{m=1}^{N}\min_{u^{\prime,m}\in \mathbf U_{\text{feas}}(Q^*,x^{\prime,m})}H_{k}(x^{\prime,m},u^{\prime,m}) -\sum_{x^\prime\in \mathbf X'}\mathbb{P}(x^\prime|x,u)\min_{u'\in\mathbf U_{\text{feas}}(Q^*,x')}H_k\left(x^\prime,u^\prime\right),
\end{equation}
such that $\mathbb E[\mathcal{N}_k(x,u)\mid\mathcal F_k]=0$ and for any $k\in\mathbb N$, 
\[
\begin{split}
\mathcal{N}^2_k(x,u)\leq & \left|\frac{1}{N}\sum_{m=1}^{N}\min_{u^{\prime,m}\in \mathbf U_{\text{feas}}(Q^*,x^{\prime,m})}H_{k}(x^{\prime,m},u^{\prime,m}) -\sum_{x^\prime\in \mathbf X'}\mathbb{P}(x^\prime|x,u)\min_{u'\in\mathbf U_{\text{feas}}(Q^*,x')}H_k\left(x^\prime,u^\prime\right)\right|^2\\
\leq & 2\max_{x,u}Q^2_k(x,u).
\end{split}
\]
Then the assumptions in Proposition 4.4 in \cite{bertsekas1995neuro} on the noise term $\mathcal{N}_k(x,u)$ are verified. Following the analogous arguments in \eqref{eq:contraction_Q}, we can also show that $\|\mathbf F_R[H_1]-\mathbf F_R[H_2]\|_\xi\leq \beta\left\| H_{1}- H_{2}\right\|_\xi$ where $\beta\in(0,1)$ is given by \eqref{eq:beta} and $\xi$ is given by \eqref{eq:xi}, i.e., $\mathbf F_R$ is a contraction mapping with respect to the $\xi$ norm. By combining these arguments, all assumptions in Proposition 4.4 in \cite{bertsekas1995neuro} are justified. This in turns implies the convergence of $\{H_k(x,u)\}_{k\in\mathbb N}$ to $H^*(x,u)$ component-wise, where $Q^*$ is the unique fixed point solution of $\mathbf F_R[H](x,u)=H(x,u)$.

\subsection{Proof of Theorem \ref{thm:asy_Q}}
The convergence proof of asynchronous two phase $Q-$learning is split into the following two steps.

\noindent\paragraph{Step 1 (Convergence of $ Q-$update)}
Similar to the proof of Theorem \ref{thm:syn_Q}, the $Q-$update in asynchronous two phase $Q-$learning can be written as:
\[
Q_{k+1}(x,u)= (1-\zeta_{2,k}(x,u))Q_{k}(x,u) +\zeta_{2,k}(x,u)(\Theta_k(x,u)+\Psi_k(x,u)),
\]
where
\[
 \Theta_k(x,u)=\left\{\begin{array}{ll}
  \Pi_{\mathcal B(x)}\bigg(\mathbf{D}(x,u)+ \sum_{x'\in\mathbf X'}\mathbb P(x'|x,u)\min_{u'\in\mathbf{U}(x')} Q_{k}(x',u') \bigg)&\text{if $(x,u)=(x_k,u_k)$}\\
 Q_{k}(x,u)&\text{otherwise}
 \end{array}\right.
 \]
 and the noise term is given by
 \[
 \Psi_k(x,u)=\left\{\begin{array}{ll}
N_k(x,u) &\text{if $(x,u)=(x_k,u_k)$}\\
0&\text{otherwise}
 \end{array}\right.
 \]
 with $N_k$ defined in \eqref{eq:noise_term}.
 Since $N_k(x,u)\rightarrow 0$ as $k\rightarrow \infty$, it can also be seen that  $\Psi_k(x,u)\rightarrow 0$ as $k\rightarrow \infty$. Furthermore, for any $k\in\mathbb N$, we also have that $\Psi^2_k(x,u)\leq N^2_k(x,u)\leq 2\max_{x,u}Q^2_k(x,u)$. Then the assumptions in Proposition 4.5 in \cite{bertsekas1995neuro} on the noise term $N_k(x,u)$ are verified. Now we define the asynchronous Bellman operator
\[
\widetilde{\mathbf F}[ Q](x,u)=\left\{\begin{array}{ll}
  \Pi_{\mathcal B(x)}\bigg(\mathbf{D}(x,u)+ \sum_{x'\in\mathbf X'}\mathbb P(x'|x,u)\min_{u'\in\mathbf{U}(x')} Q(x',u') \bigg)&\text{if $(x,u)=(x_k,u_k)$}\\
 Q(x,u)&\text{otherwise}
 \end{array}\right..
 \]
It can easily checked that the fixed point solution of $\mathbf F[ Q](x,u)= Q(x,u)$, i.e., $Q^*$, is also a fixed point solution of $\widetilde{\mathbf F}[ Q](x,u)= Q(x,u)$. Next we want to show that $\widetilde{\mathbf F}[ Q]$ is a contraction operator with respect to $\xi$. Let $\{\ell_{k}\}$ be a strictly increasing sequence ($\ell_{k}<\ell_{k+1}$ for all $k$) such that $\ell_0=0$, and every state-action pair $(x,u)$ in $\mathbf X\times\mathbf U$ is being updated at least once during this time period. Since every state action pair is visited infinitely often, Borel-Cantelli lemma \cite{ross1996stochastic} implies that for each finite $k$, both $\ell_{k}$ and $\ell_{k+1}$ are finite. For any $\ell\in[\ell_{k},\ell_{k+1}]$,  the result in \eqref{eq:contraction_Q} implies the following expression:
\[
\begin{array}{ll}
|\widetilde{\mathbf F}^{\ell+1}[ Q](x,u)-Q^*(x,u)|\leq \beta\xi(x) \left\|  \widetilde{\mathbf F}^{\ell}[ Q]- Q^*\right\|_\xi &\text{if $(x,u)=(x_k,u_k)$}\\
|\widetilde{\mathbf F}^{\ell+1}[ Q](x,u)-Q^*(x,u)|= |\widetilde{\mathbf F}^{\ell}[ Q](x,u)-Q^*(x,u)|&\text{otherwise}
\end{array}
\] 
From this result, one can first conclude that $\widetilde{\mathbf F}[ Q]$ is a non-expansive operator, i.e.,
\[
|\widetilde{\mathbf F}^{\ell+1}[ Q](x,u)-Q^*(x,u)|\leq \xi(x) \left\|  \widetilde{\mathbf F}^{\ell}[ Q]- Q^*\right\|_\xi.
\] 
Let $l(x,u)$ be the last index strictly between $\ell_{k}$ and $\ell_{k+1}$ where the state-action pair $(x,u)$ is updated. There exists $\beta\in(0,1)$ such that
\[
|\widetilde{\mathbf F}^{\ell_{k+1}}[ Q](x,u)-Q^*(x,u)|\leq \beta\xi(x) \left\|  \widetilde{\mathbf F}^{l(x,u)}[ Q]- Q^*\right\|_\xi
\]
From the definition of $\ell_{k+1}$, it is obvious that  $\ell_{k}<\max_{x,u}l(x,u)< \ell_{k+1}$. The non-expansive property of $\widetilde{\mathbf F}$ also implies that $\left\|  \widetilde{\mathbf F}^{l(x,u)}[ Q]- Q^*\right\|_\xi\leq \left\|  \widetilde{\mathbf F}^{\ell_{k}}[ Q]- Q^*\right\|_\xi$. Therefore we have that 
\[
|\widetilde{\mathbf F}^{\ell_{k+1}}[ Q](x,u)-Q^*(x,u)|\leq \beta\xi(x) \left\|  \widetilde{\mathbf F}^{\ell_{k}}[ Q]- Q^*\right\|_\xi.
\]
Combining these arguments implies that $\|\widetilde{\mathbf F}^{\ell_{k+1}}[ Q]-Q^*\|_\xi\leq \beta \left\|  \widetilde{\mathbf F}^{\ell_{k}}[ Q]- Q^*\right\|_\xi$. Thus for $\delta_k=\ell_{k+1}-\ell_{k}>1$ and $Q_k(x,u)=\widetilde{\mathbf F}^{\ell_{k}}[ Q](x,u)$, the following contraction property holds:
\begin{equation}\label{eq:contraction_Q_asy}
\|\widetilde{\mathbf F}^{\delta_k}[ Q_k]-Q^*\|_\xi\leq \beta \left\| Q_k- Q^*\right\|_\xi,
\end{equation}
where the following fixed point equation holds: $\widetilde{\mathbf F}^{\delta_k}[Q^*] (x,u)=Q^* (x,u)$.
Then by Proposition 4.5 in \cite{bertsekas1995neuro}, the sequence $\{Q_k(x,u)\}_{k\in\mathbb N}$ converges to $Q^*(x,u)$ component-wise, where $Q^*$ is the unique fixed point solution of both $\mathbf F[Q](x,u)=Q(x,u)$ and $\widetilde{\mathbf F}[ Q](x,u)= Q(x,u)$.

\noindent\paragraph{Step 2 (Convergence of $H-$update)}
Since $ Q$ converges at a faster timescale than $H$, the $H-$update in asynchronous two phase $Q-$learning can be rewritten using the converged quantity, i.e., $ Q^*$, as follows:
\[
H_{k+1}(x,u)= (1-\zeta_{1,k}(x,u))H_{k}(x,u) +\zeta_{1,k}(x,u)(\Lambda_k(x,u)+\Phi_k(x,u)),
\]
where
\[
 \Lambda_k(x,u)=\left\{\begin{array}{ll}
  \mathbf{R}(x,u)+\sum_{x^\prime\in \mathbf X'}\mathbb{P}(x^\prime|x,u)\min_{u'\in\mathbf U_{\text{feas}}(Q^*,x')}H_k\left(x^\prime,u^\prime\right)&\text{if $(x,u)=(x_k,u_k)$}\\
 H_{k}(x,u)&\text{otherwise}
 \end{array}\right.
 \]
 and the noise term is given by
 \[
 \Phi_k(x,u)=\left\{\begin{array}{ll}
\mathcal{N}_k(x,u) &\text{if $(x,u)=(x_k,u_k)$}\\
0&\text{otherwise}
 \end{array}\right.
 \]
 with $\mathcal{N}_k$ defined in \eqref{eq:noise_term2}. Since $\mathbb E[\mathcal{N}_k(x,u)\mid\mathcal F_k]=0$, we have that $\mathbb E[\Phi_k(x,u)\mid\mathcal F_k]=0$, i.e., $ \Phi_k(x,u)$ is a Martingale difference. Furthermore we have that $\Phi^2_k(x,u)\leq \mathcal{N}^2_k(x,u)\leq 2\max_{x,u}Q^2_k(x,u)$  for $k\in\mathbb N$. The above arguments verify the assumptions in Proposition 4.4 in \cite{bertsekas1995neuro} on the noise term $\Phi_k(x,u)$. Now define the asynchronous Bellman operator
\[
\widetilde{\mathbf F}_R[H](x,u)=\left\{\begin{array}{ll}
  \mathbf{R}(x,u)+\sum_{x^\prime\in \mathbf X'}\mathbb{P}(x^\prime|x,u)\min_{u'\in\mathbf U_{\text{feas}}(Q^*,x')}H\left(x^\prime,u^\prime\right)&\text{if $(x,u)=(x_k,u_k)$}\\
 H(x,u)&\text{otherwise}
 \end{array}\right..
 \]
It can easily checked that the fixed point solution of $\mathbf F_R[H](x,u)=H(x,u)$, i.e., $H^*$, is also a fixed point solution of $\widetilde{\mathbf F}_R[H](x,u)=H(x,u)$.
Then following analogous arguments from step 1 (in particular expression \eqref{eq:contraction_Q_asy}), for $\delta_k=\ell_{k+1}-\ell_{k}>1$ and $H_k(x,u)=\widetilde{\mathbf F}^{\ell_{k}}[ H](x,u)$, one shows that $\|\widetilde{\mathbf F}^{\delta_k}[ H_k]-H^*\|_\xi\leq \beta \left\| H_k- H^*\right\|_\xi$ for some $\beta\in(0,1)$, which further implies the following fixed point equation holds: $\widetilde{\mathbf F}^{\delta_k}[H^*] (x,u)=H^* (x,u)$. Thus by Proposition 4.4 in \cite{bertsekas1995neuro}, the sequence $\{H_k(x,u)\}_{k\in\mathbb N}$ converges to $H^*(x,u)$ component-wise, where $Q^*$ is the unique fixed point solution of both $\widetilde{\mathbf F}_R[H](x,u)=H(x,u)$ and $\mathbf F_R[H](x,u)=H(x,u)$.

\end{document}